\documentclass{IEEEtran}

\usepackage[utf8]{inputenc}

\usepackage{setspace}

\usepackage[colorlinks]{hyperref}

\usepackage{amsmath,amsfonts,amssymb,mathrsfs,amsthm}
\usepackage{bm}
\usepackage{microtype}
\usepackage{mathtools}
\usepackage{breqn}

\usepackage{upgreek}

\usepackage{textcomp,nicefrac}

\usepackage{algorithm}
\usepackage{algpseudocode}

\usepackage{glossaries-extra}

\usepackage{graphicx}
\usepackage{adjustbox}
\usepackage{caption}
\usepackage[labelformat=simple,font=footnotesize]{subcaption}
\usepackage[dvipsnames]{xcolor}
\usepackage{wrapfig}

\usepackage{tikz}
\usetikzlibrary{arrows.meta,shapes,intersections,calc,angles,decorations.pathreplacing,intersections,quotes,angles,positioning,fit,petri,through,shadows,plotmarks,spy}
\usepackage{pgfplots}
\pgfplotsset{compat=1.18}
\usepgfplotslibrary{groupplots}
\usepgfplotslibrary{fillbetween}

\usepackage{overpic}
\usepackage{stfloats}

\usepackage{booktabs}
\usepackage{multirow}

\usepackage[
	style=ieee,
	dashed=false,
	maxbibnames=20,
	minbibnames=2,
	maxcitenames=1,
	mincitenames=1,
	backend=biber,
	sortcites=true,
	sorting=none,
	useprefix=false
]{biblatex}

\newtheorem{proposition}{Proposition}
\newtheorem{assumption}{Assumption}

\newcommand{\ImageWithMethodLabels}[2][0.985\textwidth]{%
\begin{tikzpicture}
    \node[inner sep=0pt, anchor=north west] (img) at (0,0)
        {\includegraphics[width=#1]{#2}};

    \node[anchor=north, font=\bfseries\scriptsize, inner sep=0pt]
        at ($(img.south west)!0.0625!(img.south east)+(0,-2pt)$) {FBP};

    \node[anchor=north, font=\bfseries\scriptsize, inner sep=0pt]
        at ($(img.south west)!0.1875!(img.south east)+(0,-2pt)$) {DiffPIR};

    \node[anchor=north, font=\bfseries\scriptsize, inner sep=0pt]
        at ($(img.south west)!0.3125!(img.south east)+(0,-2pt)$) {DDS};

    \node[anchor=north, font=\bfseries\scriptsize, inner sep=0pt]
        at ($(img.south west)!0.4375!(img.south east)+(0,-2pt)$) {Prox-SNORE};

    \node[anchor=north, font=\bfseries\scriptsize, inner sep=0pt]
        at ($(img.south west)!0.5625!(img.south east)+(0,-2pt)$) {Ann-Prox-SNORE};

    \node[anchor=north, font=\bfseries\scriptsize, inner sep=0pt]
        at ($(img.south west)!0.6875!(img.south east)+(0,-2pt)$) {Prox ML-PnP};

    \node[anchor=north, font=\bfseries\scriptsize, inner sep=0pt]
        at ($(img.south west)!0.8125!(img.south east)+(0,-2pt)$) {ML-SPnP (ours)};

    \node[anchor=north, font=\bfseries\scriptsize, inner sep=0pt]
        at ($(img.south west)!0.9375!(img.south east)+(0,-2pt)$) {GT};
\end{tikzpicture}%
}

\newcommand{\boldc}{\bm{c}}
\newcommand{\boldd}{\bm{d}}
\newcommand{\bolde}{\bm{e}}

\newcommand{\boldn}{\bm{n}}

\newcommand{\boldx}{\bm{x}}
\newcommand{\boldy}{\bm{y}}
\newcommand{\boldz}{\bm{z}}

\newcommand{\boldzero}{\bm{0}}

\newcommand{\boldA}{\bm{A}}

\newcommand{\boldD}{\bm{D}}

\newcommand{\boldI}{\bm{I}}

\newcommand{\boldP}{\bm{P}}

\newcommand{\boldR}{\bm{R}}

\newcommand{\boldU}{\bm{U}}
\newcommand{\boldV}{\bm{V}}
\newcommand{\boldW}{\bm{W}}

\newcommand{\calN}{\mathcal{N}}

\newcommand{\calV}{\mathcal{V}}

\newcommand{\calX}{\mathcal{X}}
\newcommand{\calY}{\mathcal{Y}}

\newcommand{\boldcalT}{\bm{\mathcal{T}}}

\newcommand{\rmd}{\mathrm{d}}

\DeclareMathOperator*{\argmin}{arg\,min}

\newcommand{\R}{\mathbb{R}}

\newcommand{\transp}{^\top}

\setabbreviationstyle{long-short}
\glsdisablehyper

\newabbreviation{2D}{2-D}{two-dimensional}
\newabbreviation{3D}{3-D}{three-dimensional}
\newabbreviation{dD}{$d$-D}{$d$-dimensional}
\newabbreviation{mD}{$m$-D}{$m$-dimensional}
\newabbreviation{AC}{AC}{attenuation correction}
\newabbreviation{ADMM}{ADMM}{alternating direction method of multipliers}
\newabbreviation{ALARA}{ALARA}{As Low As Reasonnably Achievable}
\newabbreviation{CAOL}{CAOL}{convolutional analysis operator learning}
\newabbreviation{CDL}{CDL}{convolutional DiL}
\newabbreviation{CNN}{CNN}{convolutional neural network}
\newabbreviation{CRC}{CRC}{contrast recovery coefficient}
\newabbreviation{CS}{CS}{compressive sensing}
\newabbreviation{CT}{CT}{computed tomography}
\newabbreviation{SVCT}{SVCT}{sparse-view computed tomography}
\newabbreviation{DDIM}{DDIM}{denoising diffusion implicit model}
\newabbreviation{DDPM}{DDPM}{denoising diffusion probabilistic model}
\newabbreviation{DECT}{DECT}{dual-energy computed tomography}
\newabbreviation{DiL}{DiL}{dictionary learning}
\newabbreviation{DL}{DL}{deep learning}
\newabbreviation{AI}{AI}{artificial intelligence}
\newabbreviation{DM}{DM}{diffusion model}
\newabbreviation{DPS}{DPS}{diffusion posterior sampling}
\newabbreviation{SDE}{SDE}{stochastic differential equation}
\newabbreviation{DTV}{DTV}{directional total variation}
\newabbreviation{DWT}{DWT}{discrete wavelet transform}
\newabbreviation{EM}{EM}{expectation-maximization}
\newabbreviation{FBP}{FBP}{filtered backprojection}
\newabbreviation{FC}{FC}{full-count}
\newabbreviation{FWHM}{FWHM}{full width at half maximum}
\newabbreviation{GAN}{GAN}{generative adversarial network}
\newabbreviation{GT}{GT}{ground truth}
\newabbreviation{HC}{HC}{high-count}
\newabbreviation{HU}{HU}{Houndsfield Units}
\newabbreviation{FOV}{FOV}{field of view}
\newabbreviation{IDWT}{IDWT}{inverse discrete wavelet transform}
\newabbreviation{IFFT}{IFFT}{inverse fast Fourier transform}
\newabbreviation{JRAA}{JRAA}{joint reconstruction of activity and attenuation}
\newabbreviation{JTV}{JTV}{joint total variation}
\newabbreviation{KL}{KL}{Kullback-Leibler}
\newabbreviation{LC}{LC}{low-count}
\newabbreviation{LAC}{LAC}{linear attenuation coefficient}
\newabbreviation{LDM}{LDM}{latent diffusion model}
\newabbreviation{LBFGS}{L-BFGS}{limited-memory  Broyden-Fletcher-Goldfarb-Shanno}
\newabbreviation[
	longplural={lines of response},
	firstplural={lines of response (LORs)}
]{LOR}{LOR}{line of response}
\newabbreviation{MAP}{MAP}{maximum \emph{a~posteriori}}
\newabbreviation{MAPEM}{MAPEM}{maximum \textit{a posteriori} expectation maximization}
\newabbreviation{MBIR}{MBIR}{model-based iterative reconstruction}
\newabbreviation{MCDL}{CDL}{multichannel convolutional dictionary learning}
\newabbreviation{MCAOL}{MCAOL}{multichannel convolutional analysis operator learning}
\newabbreviation{MLAA}{MLAA}{maximum likelihood activity and attenuation}
\newabbreviation{ACF}{ACF}{attenuation correction factor}
\newabbreviation{MLACF}{MLACF}{maximum likelihood activity and attenuation correction factors}
\newabbreviation{MLEM}{MLEM}{maximum-likelihood expectation-maximization}
\newabbreviation{MNIST}{MNIST}{Modified National Institute of Standards and Technology}
\newabbreviation{MR}{MR}{magnetic resonance}
\newabbreviation{MRI}{MRI}{magnetic resonance imaging}
\newabbreviation{MSE}{MSE}{mean squared error}
\newabbreviation{VOI}{VOI}{volume of interest}
\newabbreviation{MVAE}{MVAE}{multi-branch VAE}
\newabbreviation{NAC}{NAC}{non attenuation-corrected}
\newabbreviation{NN}{NN}{neural network}
\newabbreviation{NRMSE}{NRMSE}{normalized root mean squared error}
\newabbreviation{OOD}{OOD}{out-of-distribution}
\newabbreviation{OSEM}{OSEM}{ordered subsets expectation maximization}
\newabbreviation{PCA}{PCA}{principal component analysis}
\newabbreviation{PCCT}{PCCT}{photon-counting computed tomography}
\newabbreviation{PDF}{PDF}{probability distribution function}
\newabbreviation{PET}{PET}{positron emission tomography}
\newabbreviation{PLS}{PLS}{parallel level set}
\newabbreviation{PML}{PML}{penalized maximum likelihood}
\newabbreviation{PVE}{PVE}{partial volume effect}
\newabbreviation{PSNR}{PSNR}{peak signal-to-noise ratio}
\newabbreviation{PWLS}{PWLS}{penalized weighted least squares}
\newabbreviation{RC}{RC}{recovery coefficient}
\newabbreviation{ReLU}{RelU}{rectified linear unit}
\newabbreviation{ROI}{ROI}{region of interest}
\newabbreviation{SPECT}{SPECT}{single-photon emission CT}
\newabbreviation{SPS}{SPS}{separable paraboloidal surrogates}
\newabbreviation{SQS}{SQS}{separable quadratic surrogate}
\newabbreviation{SNR}{SNR}{signal-to-noise ratio}
\newabbreviation{SSIM}{SSIM}{structural similarity index measure}
\newabbreviation{LPIPS}{LPIPS}{learned perceptual image patch similarity}
\newabbreviation{SSS}{SSS}{single scatter simulation}
\newabbreviation{TNV}{TNV}{total nuclear variation}
\newabbreviation{TOF}{TOF}{time-of-flight}
\newabbreviation{TV}{TV}{total variation}
\newabbreviation{VAE}{VAE}{variational autoencoder}
\newabbreviation{beta-VAE}{$\beta$-VAE}{beta-variational autoencoder}
\newabbreviation{WDM}{WDM}{wavelet diffusion model}
\newabbreviation{WGAN}{W-GAN}{Wasserstein GAN}
\newabbreviation{WLS}{WLS}{weighted least squares}
\newabbreviation{STD}{STD}{standard deviation}
\newabbreviation{XCAT}{XCAT}{extended cardiac-torso}
\newabbreviation{FDG}{\textsuperscript{18}F-FDG}{[\textsuperscript{18}F]Fluorodeoxyglucose}
\newabbreviation{PSMA}{[\textsuperscript{18}F]-PSMA}{[\textsuperscript{18}F]-\emph{prostate-specific membrane antigen-11}}
\newabbreviation{PnP}{PnP}{Plug-and-Play}
\newabbreviation{RED}{RED}{regularization by denoising}
\newabbreviation{MMSE}{MMSE}{minimum mean-square error}
\newabbreviation{SNORE}{SNORE}{Stochastic deNOising REgularization}

\newabbreviation{MRA}{MRA}{multiresolution analysis}
\newabbreviation{DPIR}{DPIR}{Deep Plug-and-Play for Image
Restoration}
\newabbreviation{DDS}{DDS}{Decomposed Diffusion Sampler}
\newabbreviation{DiffPIR}{DiffPIR}{Diffusion Plug-and-Play Image Restoration}
\newabbreviation{ML}{ML}{multilevel}
\newabbreviation{NFE}{NFE}{number of function evaluation} 

\begin{document}
	
	\title{Multilevel Stochastic Plug-and-Play for Sparse-View CT Reconstruction}
	
	\author{
		Antoine De Paepe, Alexandre Bousse, Dimitris Visvikis
		\thanks{
			This work was supported by CPER 2021--2027 IMAGIIS (INNOV-XS) and by the Région Bretagne through the ARED doctoral research grant program.
			}
		\thanks{
			All authors  are affiliated with the LaTIM, Inserm, UMR 1101, \emph{Universit\'e de Bretagne Occidentale}, Brest, France.
			}
		\thanks{
			Corresponding author: A. Bousse; email: \href{mailto:bousse@univ-brest.fr}{\texttt{bousse@univ-brest.fr}}.
		}	
		}
	
	\maketitle
	
	


    \begin{abstract}
	\Gls{SVCT} reduces radiation exposure and acquisition time, but the limited number of projection views makes the reconstruction problem severely ill-posed and leads to streak artifacts when analytical methods are used. \Gls{PnP} methods provide an effective way to combine data fidelity with learned image priors, while stochastic \gls{PnP} methods further improve robustness by matching the denoiser input distribution through re-noising. However, these methods often require many iterations to converge, which limits their practical efficiency. In this work, we propose a \gls{ML} stochastic \gls{PnP} method for \gls{SVCT} that accelerates stochastic \gls{PnP} reconstruction. We highlight that, in the stochastic setting, directly enforcing prior coherence across levels would require accurately estimating fine-level prior gradients through multiple denoiser function evaluations, which substantially increases the computational cost. Motivated by this observation, we perform the multilevel steps in \gls{MRA} approximation spaces. This choice is supported by the structure of the wavelet decomposition, which causes the prior-coherence correction to vanish in expectation, thereby avoiding costly estimation of fine-level stochastic prior gradients for the coarse-level corrections. Experiments on \gls{SVCT} reconstruction show that our method, called Multilevel Stochastic Plug-and-Play (ML-SPnP), achieves reconstruction quality comparable to state-of-the-art methods while substantially reducing runtime.
\end{abstract}

\begin{IEEEkeywords}
	Sparse-view CT, Stochastic Plug-and-Play, Multilevel Algorithms
\end{IEEEkeywords}
	\section{Introduction}

\glsunset{CT} 

\IEEEPARstart{C}{omputed} tomography (CT) is a widely used imaging modality in medical diagnosis, allowing the recovery of internal anatomical structures from X-ray projection measurements. While conventional \gls{CT} relies on many projection views to obtain high-quality reconstructions, this may increase radiation exposure and acquisition time \cite{brenner2007computed}. To reduce this burden, several strategies have been proposed, including lowering the X-ray dose or reducing the number of projection views through sparse-view acquisition \cite{mccollough2009strategies}.

In \gls{SVCT}, the reduced number of projection angles makes the image reconstruction problem severely ill-posed. In that context, traditional analytical reconstruction methods such as \gls{FBP} \cite{natterer2001mathematics} often suffer from severe streak artifacts, which can obscure anatomical structures and complicate clinical interpretation. To overcome these limitations, \gls{MBIR} and \gls{CS} methods \cite{erdogan2002monotonic,sidky2008image,chambolle2011first,niu2014sparse} incorporate prior information into the reconstruction process, most commonly through handcrafted regularizers, such as \gls{TV} regularization. Although \gls{TV}-based approaches can reduce streak artifacts, they often oversmooth images and may remove fine structural details.

These limitations have motivated the development of deep learning-based methods for \gls{SVCT}. In image-domain post-processing, an initial reconstruction, typically obtained with \gls{FBP}, is refined by a neural network to suppress streak artifacts and restore anatomical details \cite{chen2017low,jin2017deep}. Unrolling methods embed the acquisition model and raw projection measurements into the network architecture by alternating learned reconstruction updates with explicit data-consistency steps \cite{adler2018learned,rudzusika20243d,vo2026efficient}. While these methods demonstrate strong reconstruction performance, they are typically tied to the scanner geometry and acquisition protocol used during training, which can limit their flexibility across different angular sampling schemes or imaging systems.

To address this lack of flexibility, recent works have explored generative priors that can be combined with different degradation or acquisition models. In particular, diffusion models \cite{ho2020denoising} have emerged as powerful generative priors for inverse problem solving \cite{chung2022diffusion,zhu2023denoising}, with several extensions for \gls{CT} reconstruction \cite{liu2023dolce,chung2023solving,chung2024decomposed,li2024ct}. However, despite their strong generative capabilities, diffusion-based methods may hallucinate realistic-looking structures in severely ill-posed settings \cite{tivnan2024hallucination}, which is particularly concerning in medical imaging. As discussed in \cite{denker2026stability}, this motivates flexible reconstruction frameworks that retain an explicit link to the measurement model while promoting stable and data-consistent recovery.

\Gls{PnP} methods provide a natural framework for this purpose by embedding a learned denoiser into an iterative reconstruction algorithm, where the data-fidelity step enforces consistency with the measurements and the denoiser acts as an image prior \cite{venkatakrishnan2013plug,chan2016plug,romano2017little}. Different \gls{PnP} variants mainly differ in how this denoising step is interpreted. In \gls{ADMM}-based \gls{PnP}, the proximal operator of a regularizer is replaced by a denoiser, yielding an implicit prior model \cite{zhang2021plug}. By contrast, \gls{RED} defines an explicit regularization term directly from the denoiser \cite{romano2017little}. Other approaches further constrain the denoiser to correspond to a gradient step or a proximal map of an explicit regularizer, leading to convergent \gls{PnP} algorithms with a clear variational interpretation \cite{hurault2021gradient,hurault2022proximal}. More recently, stochastic denoising formulations have been introduced to reduce the mismatch between the denoiser training distribution and the inputs encountered during reconstruction \cite{renaud2024plug,park2026stochastic,martin2025pnp}. Although most of these methods were originally developed for image restoration, they can be naturally extended to \gls{CT} reconstruction \cite{denker2026stability,tatachak2026gradient}.

Despite their flexibility, \gls{PnP} methods can be computationally demanding, particularly in \gls{CT}, where convergence often requires many costly reconstruction iterations. Several strategies have therefore been proposed to accelerate their convergence. For instance, methods such as \gls{DPIR} \cite{zhang2021plug} and annealed \gls{SNORE} \cite{renaud2024plug} vary the denoiser noise level along the iterations, acting as a coarse-to-fine regularization schedule. In parallel, multilevel optimization methods have recently been proposed to accelerate large-scale inverse problems by transferring information across a hierarchy of resolutions \cite{nash2000multigrid,lauga2023multilevel,lauga2024iml}. Building on this idea, \gls{ML}-\gls{PnP} extends multilevel acceleration to \gls{PnP} image restoration \cite{laurent2025multilevel}. However, adapting this framework to stochastic regularization methods such as Prox-\gls{SNORE} \cite{renaud2025convergence} remains nontrivial, because the learned prior is evaluated through stochastic denoising steps whose behavior may vary across resolutions.

In this work, we propose to extend the \gls{ML}-\gls{PnP} methods to stochastic \gls{PnP} regularizer for \gls{SVCT} reconstruction. To this end, we introduce \emph{Multilevel Stochastic Plug-and-Play} (ML-SPnP), a multilevel stochastic \gls{PnP} algorithm formulated on wavelet \gls{MRA} spaces, inspired from our previous work \cite{DePaepe2025a, PhungNgoc2025}. The main contributions of this work are as follows:
\vspace{0.1cm}
\begin{itemize}
\item \textbf{Stochastic Multilevel \gls{PnP}:} We introduce, to the best of our knowledge, the first multilevel reconstruction algorithm based on stochastic \gls{PnP} regularization.
\vspace{0.1cm}
\item \textbf{Stochastic Multilevel Coherence:} We identify the stochastic prior-coherence correction as the main obstacle to combining multilevel optimization with efficient stochastic \gls{PnP}. By reformulating the algorithm in wavelet \gls{MRA} space, we show that this correction vanishes in expectation, leaving only the deterministic data-fidelity coherence correction.
\vspace{0.1cm}
\item \textbf{Efficient \gls{SVCT} Reconstruction:} We apply ML-SPnP to \gls{SVCT} reconstruction and evaluate it on two medical \gls{CT} datasets, showing that it competes state-of-the-art stochastic \gls{PnP} reconstruction quality while substantially reducing runtime.
\vspace{0.1cm}
\end{itemize}

The remainder of this paper is organized as follows: Section~\ref{sec:background} presents the stochastic \gls{PnP} and multilevel optimization frameworks; Section~\ref{sec:methods} introduces the proposed ML-SPnP method; Section~\ref{sec:experiments} describes the experimental setup and presents the results; Section~\ref{sec:discussion} discusses the implications and limitations of the approach; finally, Section~\ref{sec:conclusion} concludes this work.

	\section{Background}\label{sec:background}

\subsection{Inverse Problem}

In \gls{SVCT}, the objective is to reconstruct an attenuation image 
$\boldx \in \R^n \coloneq \calX$, where $n = 2^{N} \times 2^{N}$ is the number of pixels, 
from projection measurements $\boldy \in \R^m \coloneq \calY$, where $m = n_\theta \cdot n_\rmd$ is the total number of detector-bin measurements, $n_\theta$ is the number of angles and $n_\rmd$ is the number of detectors. In a monochromatic (post-log) setting, the measurement $\boldy$ relates to the image $\boldx$ through
\begin{equation}\label{eq:forward_pb}
	\boldy = \boldA \boldx + \bolde,
\end{equation} 
where $\boldA$ is the X-ray projection operator and 
$\bolde \sim \calN(\boldzero_{\calX}, \bold{\Sigma})$ is the measurement noise, with 
$\bold{\Sigma}$ denoting the diagonal covariance of the measurement-wise variances. The image $\boldx$ is assumed to be a random vector with prior \gls{PDF} $p(\boldx)$, and $p(\boldy | \boldx)$ is determined by \eqref{eq:forward_pb}.

In principle, the image $\boldx$ can be reconstructed from $\boldy$ by applying an approximate inverse of the projection operator, such as the pseudo-inverse $\boldA^\dag \boldy$, which is commonly implemented in CT through analytical methods such as \gls{FBP} \cite{natterer2001mathematics}, or by solving an optimization problem of the form
\begin{equation}\label{eq:inverse_problem}
	\min_{\boldx \in \calX} \, h(\boldx) = f(\boldx) +  \lambda g(\boldx)
\end{equation}
where $f(\boldx) = d(\boldA\boldx,\boldy)$ being the data fidelity term, with $d$ measures the discrepancy between $\boldA\boldx$ and $\boldy$, $g$ is a regularizer and $\lambda>0$ is a parameter that controls the strength of $g$. 
%
%
In the Bayesian setting, i.e., \gls{MAP}, those functions are usually defined as 
\begin{equation}
	f(\boldx)=-\log p(\boldy\mid\boldx), \qquad g(\boldx)=-\log p(\boldx) \, .
\end{equation}
However, in practice, the prior $p(\boldx)$ is untractable and $g(\boldx)$ is often replaced by a handcrafted regularizer such as \gls{TV} \cite{chambolle2011first}.  

Problems of the form \eqref{eq:inverse_problem} are commonly solved using
proximal splitting algorithms \cite{parikh2014proximal}, which decouple the treatment of the data-fidelity and regularization terms. When $f$ is differentiable, the standard proximal-gradient update performs a gradient step on the data-fidelity term from the current estimate $\boldx^{(k)}$ at iteration $k$, followed by a proximal step on the regularizer:
\begin{align}
	\boldx^{(k+1)} & {} = \operatorname{prox}_{\gamma_k \lambda g}\left(\boldx^{(k)} - \gamma_k \nabla f\left(\boldx^{(k)}\right)\right)  \label{eq:prox} 
\end{align}
where $\operatorname{prox}_{h}\colon \boldx \mapsto \argmin_{\boldz} \frac{1}{2} \|\boldz-\boldx\|_2^2 + h(\boldz)$ is the proximal operator associated to the function $h$.

\subsection{PnP priors}

\Gls{PnP} methods \cite{venkatakrishnan2013plug,chan2016plug,zhang2021plug} avoid specifying the prior $g$ explicitly by replacing the proximal operator of $g$ in \eqref{eq:prox} with an image denoiser $\boldD_{\sigma}$, leading to updates of the form 
\begin{equation}
    \boldx^{(k+1)} =
    \boldD_{\sigma_k}
	    \left(
		    \boldx^{(k)} 
		    - \gamma_k 
		    \nabla f\left(\boldx^{(k)}
	    \right)
    \right) \, .
\end{equation}
A related interpretation is provided by \gls{RED} \cite{romano2017little} for Gaussian noise. 
Given a noisy version of $\boldx$, $\boldz=\boldx+\sigma\boldn$  with $\boldn\sim\mathcal{N}(\boldzero_{\calX},\boldI_{\calX})$, the \gls{MMSE} denoiser $\boldD_\sigma(\boldz)=\mathbb{E}[\boldx|\boldz]$ satisfies Tweedie's formula $\boldD_\sigma(\boldz)=\boldz+\sigma^2\nabla \log p_\sigma(\boldz)$, where $p_\sigma$ is the \gls{PDF} of $\boldz$. Rearranging gives
\begin{equation}\label{eq:gradient}
    \nabla  g_\sigma(\boldz) = \frac{1}{\sigma^2}
    \left(
        \boldz-\boldD_\sigma(\boldz)
    \right)
\end{equation}
with $g_\sigma := -\log p_\sigma$.  Thus, the \gls{RED} denoising residual can be interpreted as the gradient of a
smoothed negative log-prior. This leads to a reverse proximal splitting update, where the denoising residual is used as a prior-gradient step before applying the data-fidelity proximal operator:
\begin{equation}
    \boldx^{(k+1)}
    =
    \operatorname{prox}_{\delta f}
    \left(
        \boldx^{(k)}
        -
        \frac{\delta \lambda}{\sigma^2}
        \left(
            \boldx^{(k)}
            -
            \boldD_{\sigma}\left(\boldx^{(k)}\right)
        \right)
    \right).
\end{equation}

\subsection{Stochastic PnP priors}

Stochastic \gls{PnP} methods \cite{renaud2024plug, martin2025pnp, park2026stochastic} extend the denoiser-prior connection by explicitly injecting randomness into the regularization step. This is motivated by the fact that most denoisers are trained for Gaussian denoising, while classical \gls{PnP} and \gls{RED} often apply them to iterates whose effective noise level is not controlled. To reduce this mismatch, one can re-noise the current iterate before applying
the denoiser. This idea is used in
\gls{SNORE} \cite{renaud2024plug,renaud2025convergence}, which defines
iterates as
\begin{align}
    \boldx^{(k+1)}
    & {} =
    \operatorname{prox}_{\delta f}
    \left(
        \boldx^{(k)}
        -
        \frac{\delta \lambda}{\sigma^2}
        \left(
            \boldx^{(k)}
            -
            \boldD_{\sigma}
            \left(
                \boldx^{(k)} + \sigma \boldn^{(k)}
            \right)
        \right)
    \right) \nonumber  \\
    & \text{with } \boldn^{(k)} \sim \mathcal{N}(\boldzero_{\calX},\boldI_{\calX}) \, . \label{eq:proxsnore}
\end{align}
This preserves the data-fidelity gradient while replacing the deterministic prior gradient by a stochastic denoising-gradient estimator. 

\subsection{Multilevel optimization}

Multilevel optimization \cite{hackbusch2013multi, parpas2017multilevel, ho2022newton} is a class of computational methods designed to accelerate the solving of optimization problems by exploiting a hierarchy of related objective function at different resolutions. Rather than optimizing an objective function only on the finest, highest-dimensional space, these methods use cheaper coarse-level problems to compute correction steps that are then transferred back to the fine level. Since coarse problems involve fewer unknowns, they are less expensive to solve, often leading to faster convergence.

In the basic two-level setting, considering $h_0(\boldx_0)$ be the fine objective function defined on $\calX_0 \coloneq \calX = \R^n$, i.e., $h_0 = h$ as defined in  \eqref{eq:inverse_problem}, and $h_1(\boldx_1)$ be a coarse objective function approximating $h_0(\boldx_0)$ on a coarse image space $\calX_1\coloneq \R^{n/4}$, usually defined as $h_1(\boldx_1) = f(\boldP_1 \boldx_1) + \lambda g(\boldP_1 \boldx_1)$, where $\boldP_1\colon \calX_1\to \calX_0$ is the linear prolongation operator with associated reduction operator $\boldR_1 \propto \boldP_1\transp\colon \calX_0\to \calX_1$. Given a fine estimate $\boldx_0^{(k)}\in\calX_0$ at iteration $k$, the coarse model is often corrected using a surrogate function $\phi_1^{(k)}$ defined as
\begin{equation}
	\phi_1^{(k)}(\boldx_1)
	=
	h_1(\boldx_1)
	+
	\left\langle
	\boldc_1^{(k)},\,
	\boldx_1
	\right\rangle ,
\end{equation}
where $\boldc_1^{(k)}$ is chosen 
so that the corrected coarse model is first-order coherent \cite{wen2010line} with the fine objective at the restricted iterate:
\begin{align}
	\nabla \phi_1^{(k)}\left(\boldR_1\boldx_0^{(k)}\right)
	= {} &
	\boldR_1 \nabla h_0\left(\boldx_0^{(k)}\right) \,.
\end{align} 
The surrogate function is typically minimized on the coarse space $\calX_1$ using an iterative algorithm starting from $\boldx_1^{(k,0)} = \boldR_1 \boldx_0^{(k)}$ with $K_1$ iteration, and a temporary update in the fine scale space $\calX_0$ is given as $ {\boldx}_0^{(k)}  + \tau_k \boldP_1 \left( \boldx_1^{(k,K_1)} - \boldx_1^{(k,0)} \right)$, which replaces $\boldx_0^{(k)}$ as a starting point to estimate $\boldx_0^{(k+1)}$.
This mechanism ensures that coarse-level models are locally consistent with the fine-level optimization problem, so that coarse solves provide meaningful corrections rather than independent approximations. Applying this correction principle across a hierarchy of resolutions yields a multilevel optimization scheme, for which a V-cycle is a common choice of level traversal \cite{briggs2000multigrid}. These methods have also been adapted to imaging inverse problems, including computed tomography \cite{bouman1992nonlinear,plier2021first,elshiaty2026multilevel} and image restoration \cite{lauga2024iml,laurent2025multilevel}.





\section{Methods}\label{sec:methods}

\subsection{Wavelet Multiresolution Representation with PnP Prior}\label{sec:wml}

To formulate the multilevel optimization problem, we first specify the sequence of optimization spaces on which the different levels are defined. We construct these spaces using a wavelet-based \gls{MRA} \cite{mallat1999wavelet}, which organizes the image domain into a hierarchy of approximation spaces. A related formulation has been proposed in \cite{lauga2024iml} and in Section~4.2 of \cite{briceno2025flexible}; however, it is developed there for a different purpose than the one considered in this work. We denote this hierarchy by
$\{\calX_\ell\}_{\ell=0}^L$, $\calX_\ell = \mathbb{R}^{n/4^\ell}$, where the space \(\calX_0=\calX\) corresponds to the finest image representation, whereas larger values of \(\ell\) correspond to progressively coarser representations.

In this proposed \gls{MRA} framework, we use the \gls{2D} Haar wavelet analysis operator $\boldW_\ell = [  \boldR_\ell\transp ,  \boldV_\ell\transp ]\transp \colon \calX_{\ell-1} \to \calX_{\ell-1} $ where $\boldR_\ell\colon \calX_{\ell-1} \to \calX_{\ell}$, $\ell\ge 1$ and $\boldR_0$ defined as the identity operator, is the operator extracting the approximation coefficients, and $\boldV_\ell \colon \calX_{\ell-1} \to \calV_{\ell} \coloneq \R^{3n/4^\ell} $ denotes the operator extracting the detail coefficients. As the operator $\boldW_\ell$ is orthogonal, we have the following decomposition at each level $\ell$,
\begin{align}
	\boldx_{\ell} & {} = \boldR\transp_{\ell+1} \boldR_{\ell+1}  \boldx_{\ell} + \boldV\transp_{\ell+1} \boldV_{\ell+1} \boldx_{\ell} \nonumber \\
				  & {} = \boldR\transp_{\ell+1} \boldx_{\ell+1} + \boldV\transp_{\ell+1}  \boldd_{\ell+1} \label{eq:wavelet_decomp_bold} 
\end{align}    
where $\boldx_{\ell+1}$ denotes the coarse approximation at level $\ell+1$, and $\boldd_{\ell+1}$ denotes the corresponding detail coefficients. The cumulative restriction and prolongation operators are identified with the restriction (analysis) and prolongation (synthesis) operators, $\bar{\boldR}_\ell = \boldR_{\ell} \boldR_{\ell-1} \cdots \boldR_{0} $ and $\bar{\boldP}_\ell = \boldP_{0} \boldP_{1} \cdots \boldP_{\ell} $ with $\boldP_{\ell} = \boldR_{\ell}\transp$. In the following, given a fine-level estimate $\boldx_0^{(k)}$ at iteration $k$, we denote by $\boldx_\ell^{(k)} \coloneq \bar{\boldR}_\ell^{(k)} \boldx_0^{(k)}$ its approximation at level $\ell$.

At each level $\ell$, we define the coarse objective
\begin{equation}\label{eq:obj_l}
	h_{\sigma,\ell}(\boldx_\ell)
	\coloneqq
	f_\ell(\boldx_\ell)
	+
	\lambda g_{\sigma,\ell}(\boldx_\ell).
\end{equation}
where 
$f_\ell(\boldx_\ell)\coloneqq f(\bar{\boldP}_{\ell} \boldx_\ell) = d(\boldA\bar{\boldP}_{\ell} \boldx_\ell, \boldy)$ 
and $	g_{\sigma,\ell}(\boldx_\ell)$ is a regularizer at level $\ell$ that depends on some parameter $\sigma$.
Given a fine level estimate $\boldx_0^{(k)}$ at iteration $k$,  the corresponding coarse corrected surrogate function to minimize is 
\begin{equation}\label{eq:surrogate}
	\phi_{\sigma,\ell}^{(k)}(\boldx_{\ell})
	\coloneqq
	h_{\sigma,\ell}(\boldx_{\ell})
	+
	\left\langle
	\boldc_{\ell}^{(k)},
	\boldx_{\ell} 
	\right\rangle\, .
\end{equation}
where $\boldc_{\ell}^{(k)}$ is a correction vector recursively defined as
\begin{equation}\label{eq:cl}
\boldc_{\ell}^{(k)}
\coloneqq
\boldR_{\ell}
\nabla \phi_{\sigma,\ell-1}^{(k)}
\left(\boldx_{\ell-1}^{(k)}\right)
-
\nabla h_{\sigma,\ell}
\left(\boldR_{\ell}\boldx_{\ell-1}^{(k)}\right),
\qquad \ell \ge 1 .
\end{equation}
and null for $\ell = 0$, such that $\{\nabla \phi_{\sigma,\ell}\}_{\ell=0}^L$ satisfies the first-order coherence condition, i.e., 
\begin{equation}\label{eq:coherence_l}
	\nabla \phi_{\sigma,\ell+1}^{(k)}
	\left(
	\boldR_{\ell+1}\boldx_\ell^{(k)}
	\right)
	=
	\boldR_{\ell+1}
	\nabla \phi_{\sigma,\ell}^{(k)}
	\left(
	\boldx_\ell^{(k)}
	\right)   
\end{equation}
for $\ell = 0, \dots, L-1$. In other words, when the coarse corrected objective is evaluated at the
restricted fine-level iterate, its gradient agrees with the restriction of the
fine-level corrected gradient. This is the mechanism by which descent information
is transferred coherently across the multilevel hierarchy. Note that \eqref{eq:coherence_l} can be recursively combined with \eqref{eq:cl}, giving
\begin{equation}
	\boldc_{\ell}^{(k)} = \bar{\boldR}_{\ell} \nabla h_{\sigma,0}\left(\boldx_0^{(k)}\right) - \nabla h_{\sigma,\ell}\left(\bar{\boldR}_{\ell}\boldx_{0}^{(k)}\right)
\end{equation}
Thus, the correction vector decomposes as
$\boldc_{\ell}^{(k)}
=
\boldc_{f,\ell}^{(k)}
+
\boldc_{g,\ell}^{(k)}$, 
where
\begin{equation}\label{eq:cf}
	\boldc_{f,\ell}^{(k)}
	\coloneq
	\bar{\boldR}_{\ell}
	\nabla f_0\left(\boldx_0^{(k)}\right)
	-
	\nabla f_{\ell}\left(\bar{\boldR}_{\ell}\boldx_{0}^{(k)}\right)
\end{equation}
enforces first-order coherence of the data-fidelity term, while
\begin{equation}\label{eq:cg}
	\boldc_{g,\ell}^{(k)}
	\coloneq
	\bar{\boldR}_{\ell}
	\nabla g_{\sigma,0}\left(\boldx_0^{(k)}\right)
	-
	\nabla g_{\sigma,\ell}\left(\bar{\boldR}_{\ell}\boldx_{0}^{(k)}\right)
\end{equation}
enforces first-order coherence of the prior term. While $\boldc_{f,\ell}^{(k)}$ is deterministic and does not pose any computational problem, $\boldc_{g,\ell}^{(k)}$ is more problematic for the stochastic regulalizer \eqref{eq:gsigma_exp} introduced in Section~\ref{sec:mlpnp}, as discussed in ablation \ref{sec:cost_coherence}. Section~\ref{sec:cgzero} provides the appropriate setting such that $\boldc_{g,\ell}^{(k)}$ vanishes.

\subsection{Resolution-dependent stochastic PnP prior}\label{sec:mlpnp}

As in \gls{SNORE} \cite{renaud2024plug}, we rely on a stochastic \gls{PnP} prior, where the prior is
evaluated through Gaussian perturbations of the optimization variable. In the proposed multilevel setting, this construction is applied separately at each wavelet resolution. Let $p_\ell \coloneqq (\bar{\boldR}_\ell)_{\#}p$ be the pushforward of the finest image prior $p$ to level $\ell$, and let
$p_{\ell,\sigma}=p_\ell * \mathcal N(\boldzero_{\calX_\ell},\sigma^2 \boldI_{\calX_\ell})$ be its
Gaussian-smoothed version. For $\boldn_\ell\sim\mathcal N(\boldzero_{\calX_\ell},\boldI_{\calX_\ell})$, the
$\ell$-level stochastic prior evaluates noisy coefficients of the form $\boldx_\ell+\sigma\boldn_\ell$, and is defined by
\begin{equation}\label{eq:gsigma_exp}
	g_{\sigma,\ell}(\boldx_\ell)
	=
	-\mathbb E_{\boldn_\ell}
	\left[
	\log p_{\ell,\sigma}(\boldx_\ell+\sigma\boldn_\ell)
	\right].
\end{equation}
The corresponding stochastic vector field is
\begin{equation}\label{eq:stoch_field}
	\boldU_{\sigma,\ell}(\boldx_\ell,\boldn_\ell)
	=
	\sigma^{-2}
	\left(
	\boldx_\ell
	-
	\boldD_{\sigma,\ell}(\boldx_\ell+\sigma\boldn_\ell)
	\right),
\end{equation}
where $\boldD_{\sigma,\ell}$ is the \gls{MMSE} denoiser at level $\ell$.
By Tweedie's formula, 
we have
\begin{equation}\label{eq:gsigma_exp_grad}
	\nabla g_{\sigma,\ell}(\boldx_\ell)
	=
	\mathbb E_{\boldn_\ell
	}
	\left[
	\boldU_{\sigma,\ell}(\boldx_\ell,\boldn_\ell)
	\right].
\end{equation}
Consequently, an unbiased stochastic gradient estimator of the  objective function \eqref{eq:obj_l} is $\nabla f_\ell(\boldx_l) + \lambda \boldU_{\sigma,\ell}(\boldx_\ell,\boldn_\ell)$.

\subsection{Vanishing prior coherence correction}\label{sec:cgzero}

The corrected multilevel model introduced above contains two coherence terms: one for the data fidelity and one for the prior. While the data-fidelity correction $\boldc_{f,\ell}^{(k)}$ (cf. \eqref{eq:cf}) is deterministic and therefore quite inexpensive to evaluate, the prior correction  $\boldc_{g,\ell}^{(k)}$ (cf. \eqref{eq:cg}) is more delicate in the stochastic \gls{PnP} setting.  Indeed, the prior  is defined through an expectation over Gaussian perturbations through \eqref{eq:gsigma_exp} and its gradient is estimated using denoiser evaluations \eqref{eq:gsigma_exp_grad}. Therefore, computing an accurate prior-coherence correction would require a sufficiently accurate estimate of the fine-level prior gradient at each multilevel iteration. This increases the number of denoiser evaluations and can significantly slow down the reconstruction. The effect of this additional prior correction is discussed in more detail in \ref{sec:cost_coherence}.

For this reason, we seek a setting in which the prior-coherence term can be removed without breaking the consistency of the multilevel construction. The key observation is that, under an approximation-detail separability assumption and the orthogonality of the wavelet decomposition, the prior-coherence correction
vanishes in expectation. This allows us to set $\boldc_{g,\ell+1}^{(k)}=\boldzero_{\calX_\ell}$, and to retain only the data-fidelity correction in the practical algorithm.

\begin{algorithm}[H]
    \small
	\caption{ML-SPnP algorithm}\label{algo:wml_spnp}
	\begin{algorithmic}[1]
		\Require Initial iterate $\boldx_0^{(0)} = \boldA^\dag \boldy$, number of levels $L$, total 
		iterations $K$, multilevel iterations $K_{\rm ML}$, prior strength $\lambda$, schedule $\{\sigma_k\}_k$ 
		\For{$k=0,\dots,K-1$}
			\If{$k<K_{\rm ML}$}
				\State $\tilde{\boldx}_L^{(k)} \gets \bar{\boldR}_L \boldx_0^{(k)}$
				
				\For{$\ell=L,\dots,1$}
					\State $\boldc_{f,\ell}^{(k)}
					\gets
					\bar{\boldR}_{\ell}\nabla f_0 \left(\boldx_0^{(k)}\right)
					-
					\nabla f_{\ell}\left(\bar{\boldR}_{\ell}\boldx_0^{(k)}\right)$
					
					\State $\boldx_\ell^{(k,0)} \gets \tilde{\boldx}_\ell^{(k)}  $
					
					\For{$j=0,\dots,K_\ell-1$}
						\State 
						$\boldx_\ell^{(k,j+1)} \gets
						\boldcalT_{\sigma_k,\lambda,\ell}
						\left(\boldx_\ell^{(k,j)}, \boldc_{f,\ell}^{(k)}\right)$
					\EndFor
					
					\State $\tilde{\boldx}^{(k)}_{\ell-1}
					\gets
					\bar{\boldR}_{\ell-1}\boldx_0^{(k)}
					+
					\tau_{k,\ell}\boldP_{\ell}
					\left(\boldx_\ell^{(k,K_\ell)}-\boldx_\ell^{(k,0)} \right)$
				\EndFor
                
				\State $\widehat{\boldx}^{(k)}_0 \gets \tilde{\boldx}^{(k)}_{0}$
			\Else
	
				\State $\widehat{\boldx}_0^{(k)} \gets \boldx_0^{(k)}$
			\EndIf
			
			\State $\boldx_0^{(k+1)}
			\gets
			\boldcalT_{\sigma_k,\lambda,0}
			\left(\widehat{\boldx}_0^{(k)}, \boldzero_{\calX} \right)$
		
		\EndFor
		
	\end{algorithmic}
\end{algorithm}

The argument relies on separating the information kept at the next coarser level from the wavelet details discarded during restriction. Recall that, when moving from $\calX_\ell$ to $\calX_{\ell+1}$, the discarded information lies in the detail space $\calV_{\ell+1}$, i.e., $\boldd_{\ell+1} = \boldV_{\ell+1} \boldx_\ell$ (cf. the orthogonal decomposition \eqref{eq:wavelet_decomp_bold}). 

\begin{assumption}[Approximation-detail 
	independence]\label{ass:prior_separability}
For all $\ell=0,\ldots,L-1$, $\boldx_{\ell+1}$ and $\boldd_{\ell+1}$ are independent.
\end{assumption}

This assumption can be interpreted through the  decomposition \eqref{eq:wavelet_decomp_bold} where $\boldx_{\ell+1}$ represents the smooth, large-scale image content and $\boldd_{\ell+1}$ captures fine-scale innovations such as edges, textures, and local anatomical details. From a generative perspective, one may view an image as being formed by first generating a smooth background and then adding fine-scale details through a separate stochastic process. While approximation and detail coefficients are not strictly independent in natural images, wavelet decompositions are designed to decorrelate information across scales, significantly reducing these dependencies. We therefore adopt the independence assumption as a tractable approximation that captures the intuition that coarse image content and fine-scale innovations arise from distinct mechanisms. Under this approximation, the coarse representation provides little information about the realization of the fine-scale details beyond their overall statistics, which justifies the separability used to derive the multilevel prior-coherence property.

The consequence of Assumption~\ref{ass:prior_separability} 
is made precise in Proposition~\ref{prop:prior_coherence_cancellation}: the prior is already first-order coherent across levels, and therefore no prior correction is needed.

\begin{proposition}[Prior first-order coherence and cancellation]\label{prop:prior_coherence_cancellation}
	Assume that 
	Assumption~\ref{ass:prior_separability} holds. Then, for all
	$\ell=0,\ldots,L-1$ 
	\begin{equation}\label{eq:separability}
		p_{\ell}(\boldx_\ell) = p_{\ell+1}(\boldx_{\ell+1})  q_{\ell +1}(\boldd_{\ell+1}) 
	\end{equation}	
	where $q_{\ell +1}$ denotes the \gls{PDF} of $\boldd_{\ell+1}$, and 
	\begin{equation}
		\boldR_{\ell+1}
		\nabla g_{\sigma,\ell}(\boldx_\ell)
		=
		\nabla g_{\sigma,\ell+1}
		\left(
		\boldR_{\ell+1}\boldx_\ell
		\right).
		\label{eq:prior_first_order_coherence}
	\end{equation}
	Equivalently,
	\begin{equation}
		\boldR_{\ell+1}
		\mathbb E_{\boldn_\ell}
		\left[
		\boldU_{\sigma,\ell}(\boldx_\ell,\boldn_\ell)
		\right]
		=
		\mathbb E_{\boldn_{\ell+1}}
		\left[
		\boldU_{\sigma,\ell+1}
		\left(
		\boldR_{\ell+1}\boldx_\ell,\boldn_{\ell+1}
		\right)
		\right].
		\label{eq:stochastic_prior_first_order_coherence}
	\end{equation}
	Consequently, the prior contribution to the multilevel first-order correction
	vanishes, i.e. $\boldc^{(k)}_{g,l} = \boldzero_{\calX_\ell}$.
\end{proposition}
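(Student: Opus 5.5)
Since $\boldW_{\ell+1}=[\boldR_{\ell+1}\transp,\boldV_{\ell+1}\transp]\transp$ is orthogonal, the map $\boldx_\ell\mapsto(\boldx_{\ell+1},\boldd_{\ell+1})=(\boldR_{\ell+1}\boldx_\ell,\boldV_{\ell+1}\boldx_\ell)$ is a linear isometry with unit Jacobian. The density of $\boldx_\ell$ therefore equals the joint density of $(\boldx_{\ell+1},\boldd_{\ell+1})$ evaluated at these coordinates. The marginal of $\boldx_{\ell+1}=\boldR_{\ell+1}\boldx_\ell=\bar{\boldR}_{\ell+1}\boldx_0$ is $p_{\ell+1}$, because pushforwards compose, and the marginal of $\boldd_{\ell+1}$ is $q_{\ell+1}$ by definition. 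Assumption~\ref{ass:prior_separability} then makes the joint density a product, which gives \eqref{eq:separability}.

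Next, I would carry the factorization through the Gaussian smoothing. Let $\boldn_\ell\sim\calN(\boldzero,\boldI)$. Orthogonality of $\boldW_{\ell+1}$ means $(\boldR_{\ell+1}\boldn_\ell,\boldV_{\ell+1}\boldn_\ell)$ are independent standard Gaussians on $\calX_{\ell+1}$ and $\calV_{\ell+1}$. Hence the convolution of a product density with an isotropic Gaussian factorizes as
\begin{equation*}
p_{\ell,\sigma}(\boldz)=p_{\ell+1,\sigma}(\boldR_{\ell+1}\boldz)\,q_{\ell+1,\sigma}(\boldV_{\ell+1}\boldz),
\end{equation*}
where $q_{\ell+1,\sigma}=q_{\ell+1}*\calN(\boldzero,\sigma^2\boldI)$. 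Taking $-\log$ and the expectation over $\boldn_\ell$ in \eqref{eq:gsigma_exp}, and setting $\boldn_{\ell+1}\coloneq\boldR_{\ell+1}\boldn_\ell\sim\calN(\boldzero,\boldI_{\calX_{\ell+1}})$, yields the splitting
\begin{equation*}
g_{\sigma,\ell}(\boldx_\ell)=g_{\sigma,\ell+1}(\boldR_{\ell+1}\boldx_\ell)+r_{\sigma,\ell+1}(\boldV_{\ell+1}\boldx_\ell),
\end{equation*}
where $r_{\sigma,\ell+1}(\boldd)=-\mathbb E[\log q_{\ell+1,\sigma}(\boldd+\sigma\boldn')]$.

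Differentiating with the chain rule gives $\nabla g_{\sigma,\ell}(\boldx_\ell)=\boldR_{\ell+1}\transp\nabla g_{\sigma,\ell+1}(\boldR_{\ell+1}\boldx_\ell)+\boldV_{\ell+1}\transp\nabla r_{\sigma,\ell+1}(\boldV_{\ell+1}\boldx_\ell)$. Applying $\boldR_{\ell+1}$ and using $\boldR_{\ell+1}\boldR_{\ell+1}\transp=\boldI$ and $\boldR_{\ell+1}\boldV_{\ell+1}\transp=\boldzero$ (both from orthogonality of $\boldW_{\ell+1}$) gives \eqref{eq:prior_first_order_coherence}. Substituting \eqref{eq:gsigma_exp_grad} on both sides gives \eqref{eq:stochastic_prior_first_order_coherence}.

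For the cancellation, I would telescope. Iterating \eqref{eq:prior_first_order_coherence} along $\bar{\boldR}_\ell=\boldR_\ell\cdots\boldR_1$ gives $\bar{\boldR}_\ell\nabla g_{\sigma,0}(\boldx_0)=\nabla g_{\sigma,\ell}(\bar{\boldR}_\ell\boldx_0)$, so $\boldc_{g,\ell}^{(k)}=\boldzero$ by \eqref{eq:cg}.

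The main obstacle is the regularity needed to exchange gradient and expectation and to ensure the smoothed densities are positive and differentiable. Gaussian convolution gives smooth, positive $p_{\ell,\sigma}$, and I would assume enough integrability of the score to justify dominated convergence.
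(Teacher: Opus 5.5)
Your proposal is correct and takes essentially the same route as the paper's sketched proof: orthogonality of $\boldW_{\ell+1}$ together with the independence assumption gives the factorization, the orthogonal decomposition and the definition of $g_{\sigma,\ell}$ give the coherence identity, and telescoping through \eqref{eq:cg} gives the cancellation. You also make explicit a step the paper leaves implicit, namely that the factorization survives Gaussian smoothing because $\boldR_{\ell+1}\boldn_\ell$ and $\boldV_{\ell+1}\boldn_\ell$ are independent standard Gaussians, which is what makes $g_{\sigma,\ell}$ split additively into approximation and detail parts.
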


\begin{proof}
	The separability \eqref{eq:separability} comes from the orthogonality of $ \boldW_{\ell+1}\boldx_\ell \mapsto (\boldx_{\ell+1}, \boldd_{\ell+1} )$.
	We can then demonstrate that while \eqref{eq:prior_first_order_coherence} is obtained from \eqref{eq:wavelet_decomp_bold},  \eqref{eq:separability}, and  the definition of $g_{\sigma_\ell}$ \eqref{eq:gsigma_exp_grad}. Finally, we show that $\boldc^{(k)}_{g,l} = \boldzero_{\calX_\ell}$ by recursively applying \eqref{eq:stochastic_prior_first_order_coherence} to the second term $\nabla g_{\sigma,\ell}\left(\boldR_{\ell}\boldx_{\ell-1}^{(k)}\right)$ in \eqref{eq:cg}.
\end{proof}	

\subsection{Stochastic proximal fixed-point updates}

The update of the fixed-point update used at each level takes the form of a stochastic proximal-gradient step, with the stochastic \gls{PnP} vector field treated explicitly and the corrected data-fidelity term handled
through a proximal map. After the cancellation of the prior coherence term, only
the data-fidelity correction enters the level update.


At level $\ell$ and iteration $k$, we define the one-sample fixed-point operator by
\begin{equation}
	\boldcalT_{\sigma,\lambda,\ell}\left(\boldz_\ell, \boldc_{f,\ell}^{(k)}\right) =
	\operatorname{prox}_{\delta_\ell
		\left(
			f_\ell+\langle \boldc_{f,\ell}^{(k)},\cdot\rangle
		\right)}
	\left(\boldz_\ell
	-
	\lambda\delta_\ell
	\boldU_{\sigma,\ell}(\boldz_\ell,\boldn_\ell)\right),
\end{equation}
where $\boldn_\ell\sim\calN(\boldzero_{\calX_\ell},\boldI_{\calX_\ell})$ and $\boldU_{\sigma,\ell}$ was defined in \eqref{eq:stoch_field}. We propose to iteratively minimize the surrogate function $\phi_{\sigma,\ell}^{(k)}$ defined in \eqref{eq:surrogate}. Given an initial point $\boldx_\ell^{(k,0)}$, the minimization is performed in $K_\ell$ stochastic
iterations,
\begin{equation}
	\boldx_\ell^{(k,j+1)}
	=
	\boldcalT_{\sigma_k,\lambda,\ell}
	\left(
		\boldx_\ell^{(k,j)}, \boldc_{f,\ell}^{(k)}
	\right)\, ,
	\quad
	j=0,\ldots,K_\ell-1.
\end{equation}
Thus, each coarse problem is solved only approximately, using a small number of
stochastic fixed-point steps.

\subsection{Annealed multilevel updates}

Following the annealing procedure of~\cite{renaud2024plug}, we adopt a strategy in which the noise level is a decreasing sequence $\{\sigma_k\}_{k=0}^{K-1}$ over the multilevel iterations only. Large values of $\sigma_k$ at early iterations induce stronger denoising and stable coarse updates, while smaller values at later iterations reduce smoothing and recover finer details. Note that the same annealing parameter controls the stochastic \gls{PnP} prior at all active resolution levels during iteration $k$. Thus, each
$\ell$-level update uses the denoiser $\boldD_{\sigma_k,\ell}$ through the vector field
$\boldU_{\sigma_k,\ell}$. The overall methodology is summarized in Algorithm~\ref{algo:wml_spnp}.

\subsection{Convergence guarantees}\label{sec:convergence}

We provide a convergence justification by noting that the multilevel block in Algorithm~\ref{algo:wml_spnp} is activated only for a finite number of iterations. After this warm-up phase, the fine level algorithm coincides with the Prox-\gls{SNORE} one. Hence, under the assumptions of Prox-\gls{SNORE}~\cite{renaud2025convergence} on the fine-level denoiser $\boldD_{\sigma,0}$, the data-fidelity term $f_0$, the lower boundedness of $h_{\sigma,0}$, the tail sequence inherits the corresponding stationarity guarantees.

\section{Experiments}\label{sec:experiments}

\begin{figure*}[!t]
\centering
\setlength{\tabcolsep}{1pt}
\renewcommand{\arraystretch}{0}

\begin{tabular}{@{}c@{\hspace{1pt}}c@{}}

\adjustbox{valign=c}{\rotatebox[origin=c]{90}{\textbf{50 views}}}
&
\adjustbox{valign=c}{\includegraphics[width=0.985\textwidth]{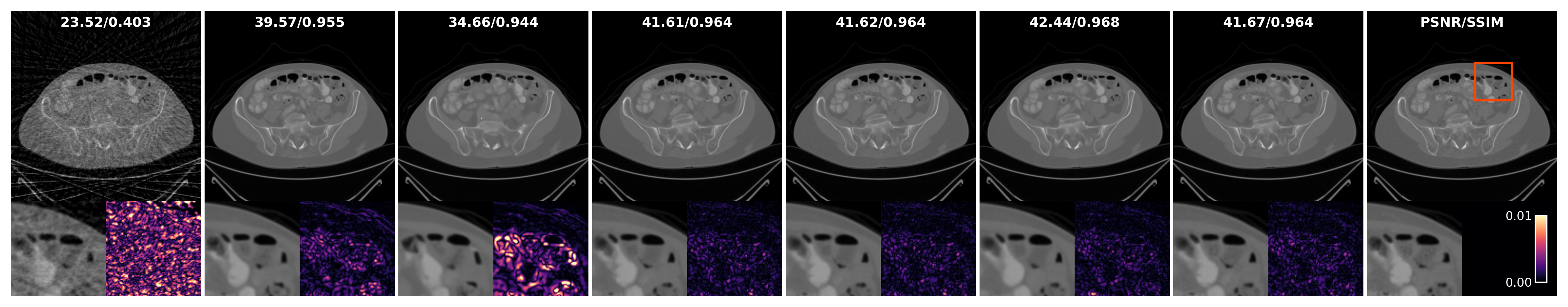}}
\\[0mm]
\adjustbox{valign=c}{\rotatebox[origin=c]{90}{\textbf{40 views}}}
&
\adjustbox{valign=c}{\includegraphics[width=0.985\textwidth]{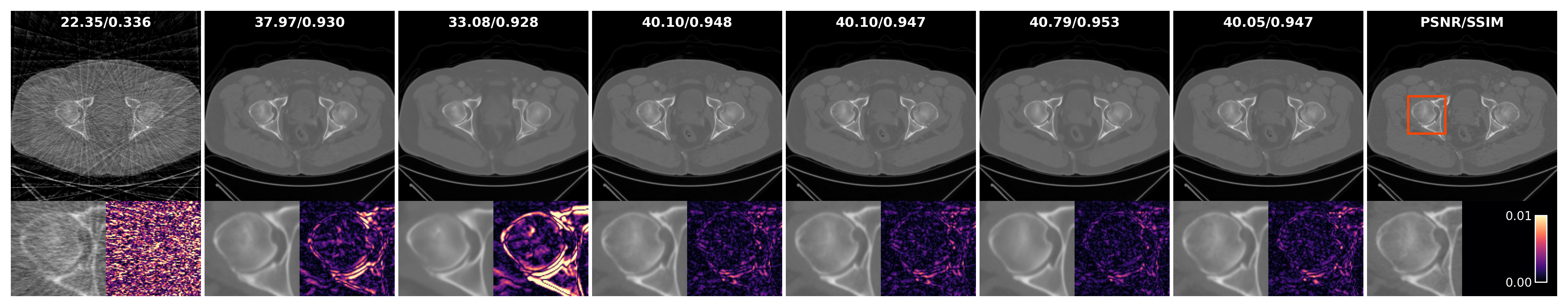}}
\\[0mm]
\adjustbox{valign=c}{\rotatebox[origin=c]{90}{\textbf{30 views}}}
&
\adjustbox{valign=c}{\includegraphics[width=0.985\textwidth]{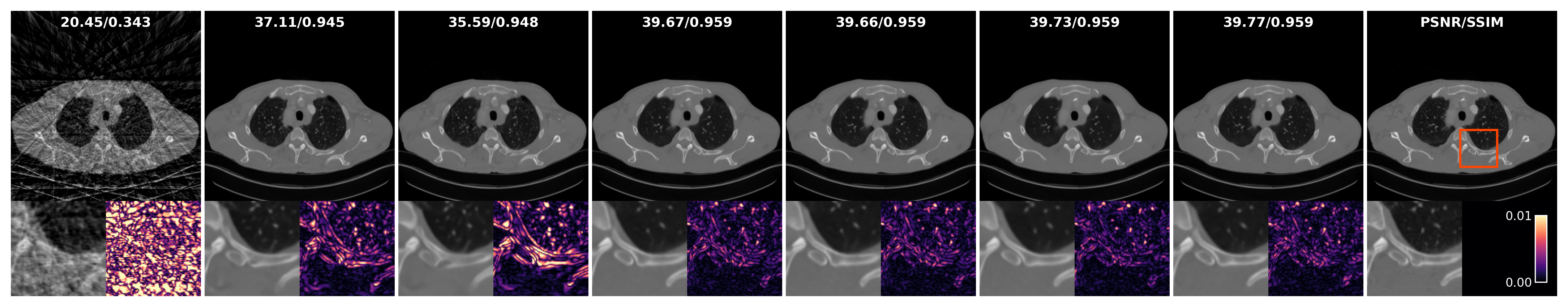}}
\\[0mm]
\adjustbox{valign=c}{\rotatebox[origin=c]{90}{\textbf{20 views}}}
&
\adjustbox{valign=c}{\ImageWithMethodLabels[0.985\textwidth]{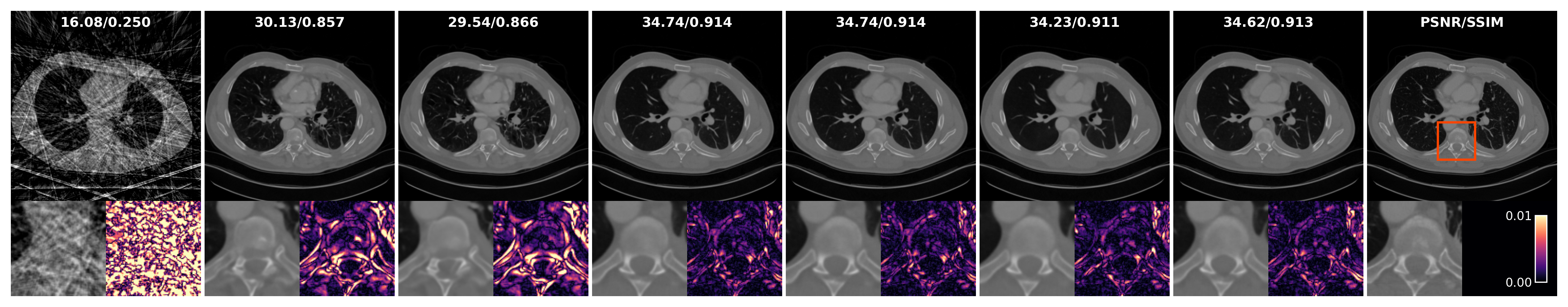}}

\end{tabular}

\caption{
Qualitative comparison on sparse-view CT reconstruction with $n_{\theta} \in \{20, 30, 40, 50\}$ projection views on the Lymph Node dataset.
For each method, the top image shows the full reconstruction, while the bottom panels show the zoomed region and corresponding error map.
PSNR/SSIM values are reported on top of each reconstruction.
}
\label{fig:svct_20_40_qualitative}
\end{figure*}

\subsection{Experimental Settings}

\subsubsection{Datasets and Preprocessing}

We considered two datasets for the \gls{SVCT} reconstruction experiments: the mediastinal and abdominal \gls{CT} Lymph Nodes dataset~\cite{roth2014new} and the head \gls{CT} CQ500 dataset~\cite{chilamkurthy2018deep}. The corresponding training sets contain 140 patients and 87,110 axial slices for the Lymph Nodes dataset, and 45 patients and 11,907 axial slices for CQ500, respectively. For validation, we used 6,347 axial slices from 10 held-out patients and 1,280 axial slices from 5 held-out patients for the Lymph Nodes and CQ500 datasets, respectively. For testing, we reported results on 200 axial slices extracted from 10 held-out patients for each dataset. For CQ500, we retained only axial slices spanning approximately from the inferior mental/mandibular region to the superior frontal calvarial region, in order to focus the evaluation on anatomically relevant head \gls{CT} sections. All images have an initial spatial resolution of 512\texttimes512 pixels.

The images are first clipped in \gls{HU}, respectively using the ranges $[-1000,1500]$ and $[-1000,2000]$ for the two datasets. They are then normalized to $[0,1]$ for training the denoising and diffusion models. 

\subsubsection{Sparse-view CT Acquisition Simulation}

To simulate realistic \gls{CT} measurements, the clipped \gls{HU} images were first converted into linear attenuation coefficients with values ranging in $[0,0.04825]~\mathrm{mm}^{-1}$ and in  $[0,0.0579]~\mathrm{mm}^{-1}$, respectively. Projection data were generated from the linear attenuation images using the CTorch projector~\cite{jiang2025ctorch}, with a fan-beam geometry and the separable-footprint projection algorithm. We used $n_\rmd=1024$ detector elements with a detector spacing of $1~\mathrm{mm}$. Following~\cite{erdogan2002monotonic}, we simulated photon-counting measurements $\tilde{\boldy}$ with a monochromatic source and a photon intensity $I_0 = 10^6$. The raw measurements $\tilde{\boldy}$ were then log-transformed to obtain $\boldy$ (cf.~\eqref{eq:forward_pb}), and the data-fidelity distance function $d$ was defined as a weighted squared $\ell_2$-norm, with weights given by the raw photon counts $\tilde{\boldy}$ to approximate the Poisson negative log-likelihood \cite{elbakri2002statistical}. We considered sparse-view \gls{CT} settings with $n_\theta \in \{20, 30, 40, 50\}$ projection views. The implementation was based on the DeepInverse library \cite{tachella2025deepinverse}.  

\subsubsection{Methods for Comparison}

\Gls{FBP} was included as a classical analytical reconstruction baseline . \Gls{DiffPIR}~\cite{zhu2023denoising} and \gls{DDS}~\cite{chung2024decomposed} were considered as diffusion-based inverse-problem solvers. \Gls{DiffPIR} was run with 100 \gls{DDIM} sampling steps and 50 iterations for each proximal step, while \gls{DDS} was run with 200 sampling steps and 10 conjugate-gradient iterations for each data-consistency update. We also compared against stochastic \gls{PnP} reconstruction baselines, namely Prox-\gls{SNORE}~\cite{renaud2025convergence} and Annealed Prox-\gls{SNORE}~\cite{renaud2024plug}. Both methods were implemented using an equivariant DRUNet denoiser~\cite{zhang2021plug, terris2024equivariant}, trained on Gaussian noise level $\sigma$ sampled uniformly in $[0,0.2]$. For the standard Prox-\gls{SNORE} variant, the denoiser noise level $\sigma$ used in~\eqref{eq:proxsnore} was chosen according to the number of projection views and set equal to the value  reported in Table~\ref{tab:wmlspnp_hyperparams}. For the annealed variant, annealing was applied only during the first 40 iterations. The noise schedule was divided into five uniform plateaus, each lasting eight iterations, and decreased from 0.2 to the final noise level $\sigma$ reported in Table~\ref{tab:wmlspnp_hyperparams}. Finally, we compared against \gls{ML}-\gls{PnP}~\cite{laurent2025multilevel}. Since the original implementation relies on an explicit gradient step for the data-fidelity guidance, we observed numerical instabilities in the sparse-view \gls{CT} setting considered here, similar to \cite{denker2026stability}. To obtain a more stable and comparable baseline, we extended \gls{ML}-\gls{PnP} by replacing this explicit data-fidelity step with a proximal data-consistency update that embeds the coherence term, leading to the Prox-\gls{ML}-\gls{PnP} variant used in our experiments. The denoising noise level $\sigma$ and the regularization strength $\lambda$ were chosen consistently as indicated in Table~\ref{tab:wmlspnp_hyperparams}. Following~\cite{laurent2025multilevel}, we used one \gls{ML}-init block followed by two \gls{ML} steps over three resolution levels.

All methods were carefully tuned to ensure a fair comparison. In particular, the hyperparameters of the SNORE-based methods and ML-SPnP, as mentioned previously, were selected such that the algorithms converge to stable solutions with comparable balances between data fidelity and regularization. All inference experiments, included our methods, were performed on the same NVIDIA RTX A5000 GPU with 24 GB of memory, under identical hardware conditions.

\begin{table*}[t]
\centering
\setlength{\tabcolsep}{2.6pt}
\renewcommand{\arraystretch}{1.05}
\resizebox{\textwidth}{!}{
\begin{tabular}{cl*{16}{c}}
\toprule[1.2pt]
\textbf{Dataset} & \textbf{Method} 
& \multicolumn{4}{c}{\textbf{20 views}} 
& \multicolumn{4}{c}{\textbf{30 views}} 
& \multicolumn{4}{c}{\textbf{40 views}} 
& \multicolumn{4}{c}{\textbf{50 views}} \\
\cmidrule(lr){3-6}
\cmidrule(lr){7-10}
\cmidrule(lr){11-14}
\cmidrule(lr){15-18}
& & PSNR $\uparrow$ & SSIM $\uparrow$ & LPIPS $\downarrow$ & Time $\downarrow$ 
& PSNR $\uparrow$ & SSIM $\uparrow$ & LPIPS $\downarrow$ & Time $\downarrow$ 
& PSNR $\uparrow$ & SSIM $\uparrow$ & LPIPS $\downarrow$ & Time $\downarrow$ 
& PSNR $\uparrow$ & SSIM $\uparrow$ & LPIPS $\downarrow$ & Time $\downarrow$ \\
\midrule

\multirow{7}{*}{\rotatebox[origin=c]{90}{\textbf{Lymph Node}}}
& FBP & 18.51 & 0.281 & 0.602 & 0.00 & 20.82 & 0.334 & 0.568 & 0.00 & 22.34 & 0.373 & 0.538 & 0.00 & 23.71 & 0.412 & 0.510 & 0.00 \\
& DiffPIR & 35.83 & 0.925 & 0.064 & \textbf{15.95} & 38.09 & 0.939 & 0.049 & 19.60 & 39.47 & 0.947 & 0.041 & 23.12 & 40.35 & 0.953 & 0.038 & 27.05 \\
& DDS & 34.29 & 0.932 & 0.085 & \underline{16.65} & 35.68 & 0.944 & 0.076 & 18.28 & 36.33 & 0.950 & 0.073 & 20.01 & 36.69 & 0.953 & 0.072 & 21.64 \\
& Prox-SNORE & 37.96 & \underline{0.944} & \underline{0.052} & 58.49 & 40.21 & \underline{0.954} & \textbf{0.037} & 32.67 & 41.51 & \underline{0.960} & \textbf{0.028} & 35.24 & 42.19 & \underline{0.963} & \textbf{0.024} & 35.21 \\
& ann-Prox-SNORE & \textbf{38.46} & \textbf{0.945} & \textbf{0.051} & 40.08 & \underline{40.27} & \underline{0.954} & \underline{0.038} & 18.04 & 41.55 & \underline{0.960} & \textbf{0.028} & 18.40 & \underline{42.23} & \underline{0.963} & \textbf{0.024} & 20.41 \\
& Prox-ML-PnP & 38.33 & \textbf{0.945} & 0.068 & 18.51 & \textbf{40.57} & \textbf{0.957} & 0.048 & \underline{15.08} & \textbf{42.11} & \textbf{0.964} & \underline{0.036} & \textbf{13.36} & \textbf{42.55} & \textbf{0.966} & \underline{0.032} & \textbf{11.18} \\
& ML-SPnP & \underline{38.40} & \textbf{0.945} & \textbf{0.051} & 19.26 & 40.21 & \underline{0.954} & \underline{0.038} & \textbf{13.36} & \underline{41.56} & \underline{0.960} & \textbf{0.028} & \underline{14.74} & 42.21 & \underline{0.963} & \textbf{0.024} & \underline{13.97} \\

\midrule

\multirow{7}{*}{\rotatebox[origin=c]{90}{\textbf{CQ500}}}
& FBP & 16.43 & 0.246 & 0.683 & 0.00 & 18.89 & 0.289 & 0.660 & 0.00 & 20.47 & 0.328 & 0.633 & 0.00 & 21.84 & 0.360 & 0.606 & 0.00 \\
& DiffPIR & 32.05 & 0.922 & 0.056 & \textbf{16.02} & 35.35 & 0.946 & 0.036 & \underline{19.79} & 37.39 & 0.959 & 0.026 & \underline{23.43} & 38.73 & 0.967 & 0.020 & 27.14 \\
& DDS & 28.90 & 0.911 & 0.068 & \underline{16.66} & 30.08 & 0.929 & 0.054 & \textbf{18.31} & 30.53 & 0.937 & 0.049 & \textbf{20.04} & 30.74 & 0.941 & 0.046 & \textbf{22.04} \\
& Prox-SNORE & 35.87 & \underline{0.955} & \underline{0.040} & 58.36 & \underline{39.59} & \underline{0.971} & \textbf{0.025} & 59.72 & 41.68 & \underline{0.978} & \textbf{0.016} & 63.45 & 42.84 & \underline{0.981} & \textbf{0.013} & 55.51 \\
& ann-Prox-SNORE & \underline{36.58} & \textbf{0.959} & \textbf{0.036} & 50.95 & 39.52 & \underline{0.971} & \textbf{0.025} & 28.66 & \underline{41.71} & \underline{0.978} & \textbf{0.016} & 29.03 & 42.87 & \underline{0.981} & \textbf{0.013} & 28.53 \\
& Prox-ML-PnP & 36.47 & \textbf{0.959} & 0.048 & 27.96 & \textbf{39.97} & \textbf{0.973} & \underline{0.031} & 24.29 & \textbf{42.37} & \textbf{0.980} & \underline{0.020} & 25.55 & \textbf{43.60} & \textbf{0.983} & \underline{0.016} & \underline{24.14} \\
& ML-SPnP & \textbf{36.63} & \textbf{0.959} & \textbf{0.036} & 34.67 & 39.57 & \underline{0.971} & \textbf{0.025} & 26.72 & \underline{41.71} & \underline{0.978} & \textbf{0.016} & 27.35 & \underline{42.89} & \underline{0.981} & \textbf{0.013} & 26.33 \\

\bottomrule[1.2pt]
\end{tabular}
}
\caption{Quantitative comparison on sparse-view CT reconstruction with 20, 30, 40, and 50 projection views on the Lymph Node and CQ500 datasets. Best results are shown in bold and second-best results are underlined. SSIM and LPIPS are rounded to three decimal places; ties after rounding are treated as equal. Execution time is reported in seconds, where lower is better. For Time, FBP is excluded from best and second-best ranking.}
\label{tab:quantitative_comparison_datasets}
\end{table*}

\subsubsection{Implementation details of ML-SPnP}

We used an equivariant DRUNet as the denoising backbone of ML-SPnP, trained over Gaussian noise levels $\sigma$ sampled uniformly in $[0,0.2]$ range. We train three resolution-specific denoisers, operating at resolutions $(512/2^\ell) \times (512/2^\ell)$ for $\ell \in \{0,1,2\}$. Each denoiser was trained until convergence on the validation set, with the batch size selected according to the available GPU memory.

\begin{table}
\centering
\begin{tabular}{cccc}
\toprule
$n_\theta$ & $\sigma$ & $\delta$ & $\lambda$ \\
\midrule
$50$ & $0.0175$ & $2{\times}10^3$ & \multirow{4}{*}{$0.9\cdot\sigma^2/\delta$} \\
$40$ & $0.0200$ & $1{\times}10^3$ & \\
$30$ & $0.0300$ & $5{\times}10^2$ & \\
$20$ & $0.0400$ & $1{\times}10^2$ & \\
\bottomrule
\end{tabular}
\caption{ML-SPnP baseline hyperparameters.}
\label{tab:wmlspnp_hyperparams}
\vspace{-0.5cm}
\end{table}

To avoid repeatedly applying the full-resolution forward operator at coarse scales, we use $\boldA_\ell \approx \boldA\bar{\boldP}_{\ell}$ as a resolution-dependent discretization of the X-ray transform. This operator was normalized by the wavelet scaling factor $1/2^\ell$, which compensates for the intensity rescaling of the approximation subband across decomposition levels. Moreover, $\tau_{k, l}$ is set to one following \cite{laurent2025multilevel}. The hyperparameters used in our experiments are reported in Table~\ref{tab:wmlspnp_hyperparams}.

\begin{table}[H]
\centering
\footnotesize
\setlength{\tabcolsep}{3pt}
\begin{tabular}{@{}c c c l l@{}}
\toprule
$n_\theta$ & $K_\ell$ & $K_{\mathrm{ML}}$ & Levels $\ell$ & Noise levels $\sigma_\ell$ \\
\midrule
$20$ & $5$ & $6$ & $[3,3,2,2,2,2]$ & $[0.20,0.16,0.12,0.12,0.10,0.08]$ \\
$30$ & $3$ & $4$ & $[2,2,2,2]$     & $[0.16,0.12,0.08,0.05]$ \\
$40$ & $3$ & $3$ & $[2,2,2]$       & $[0.15,0.10,0.05]$ \\
$50$ & $3$ & $3$ & $[2,2,2]$       & $[0.15,0.10,0.05]$ \\
\bottomrule
\end{tabular}
\caption{ML-SPnP multilevel and annealing schedules.}\label{tab:wmlspnp_schedule}
\end{table}

ML-SPnP was configured to perform multilevel updates only during the first $K_{\mathrm{ML}}$ outer iterations. During this initial phase, the noise level is annealed to promote coarse-to-fine reconstruction and accelerate convergence. The corresponding multilevel configurations and noise schedules are reported in Table~\ref{tab:wmlspnp_schedule}. Note that during the annealed multilevel phase, the regularization parameter $\lambda$ was replace as by an iteration dependant $\lambda_k = 0.9 \cdot \sigma_k^2/\delta$.

\subsubsection{Evaluation metrics}

Reconstruction quality was evaluated using \gls{PSNR}, \gls{SSIM}, and \gls{LPIPS} \cite{zhang2018unreasonable}. We also reported reconstruction time to compare the computational efficiency of the different methods. For optimization based iterative methods, the reconstruction was stopped once the \gls{PSNR} reaches a plateau, which we detect when the variation of \gls{PSNR} over consecutive iterations becomes negligible.

\subsection{Results}

\begin{figure*}[!t]
\centering
\setlength{\tabcolsep}{1pt}
\renewcommand{\arraystretch}{0}

\begin{tabular}{@{}c@{\hspace{1pt}}c@{}}

\adjustbox{valign=c}{\rotatebox[origin=c]{90}{\textbf{50 views}}}
&
\adjustbox{valign=c}{\includegraphics[width=0.985\textwidth]{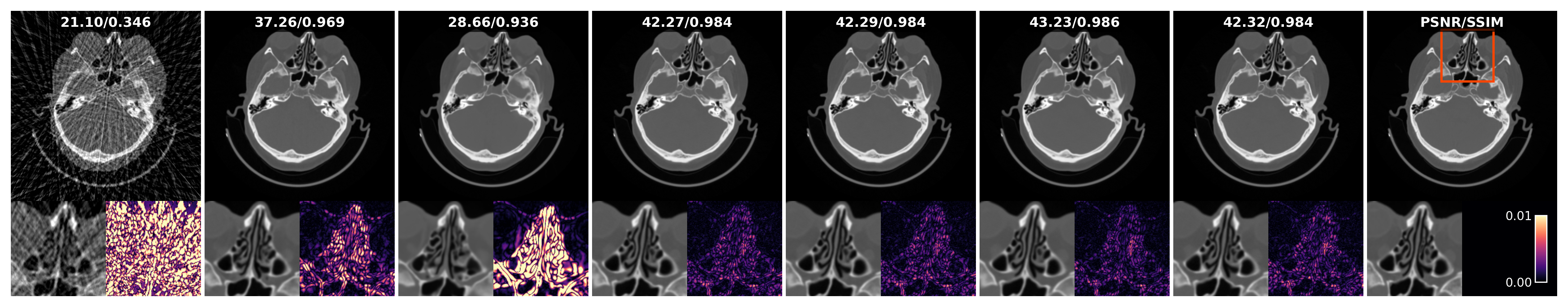}}
\\[0mm]
\adjustbox{valign=c}{\rotatebox[origin=c]{90}{\textbf{40 views}}}
&
\adjustbox{valign=c}{\includegraphics[width=0.985\textwidth]{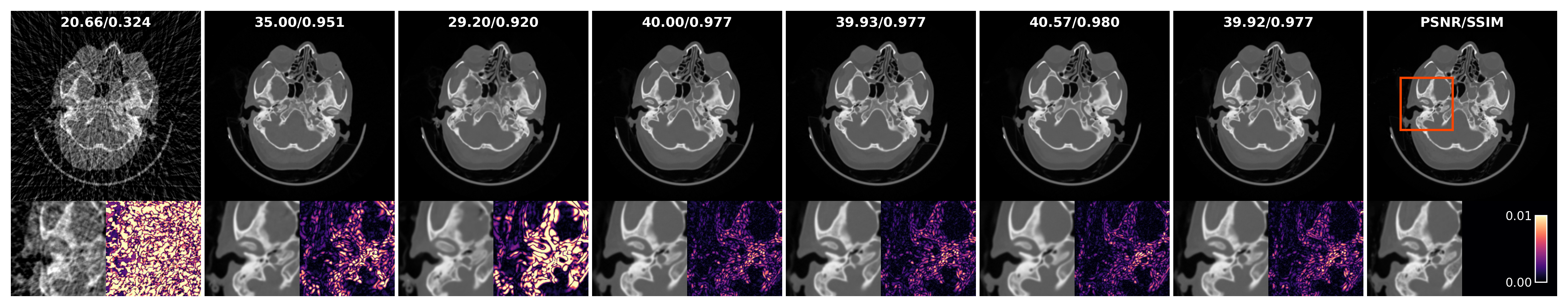}}
\\[0mm]
\adjustbox{valign=c}{\rotatebox[origin=c]{90}{\textbf{30 views}}}
&
\adjustbox{valign=c}{\includegraphics[width=0.985\textwidth]{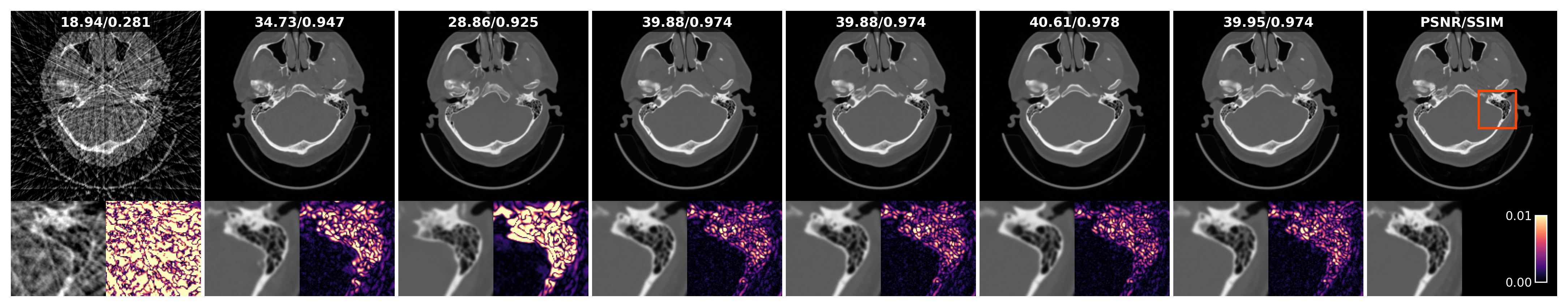}}
\\[0mm]
\adjustbox{valign=c}{\rotatebox[origin=c]{90}{\textbf{20 views}}}
&
\adjustbox{valign=c}{\ImageWithMethodLabels[0.985\textwidth]{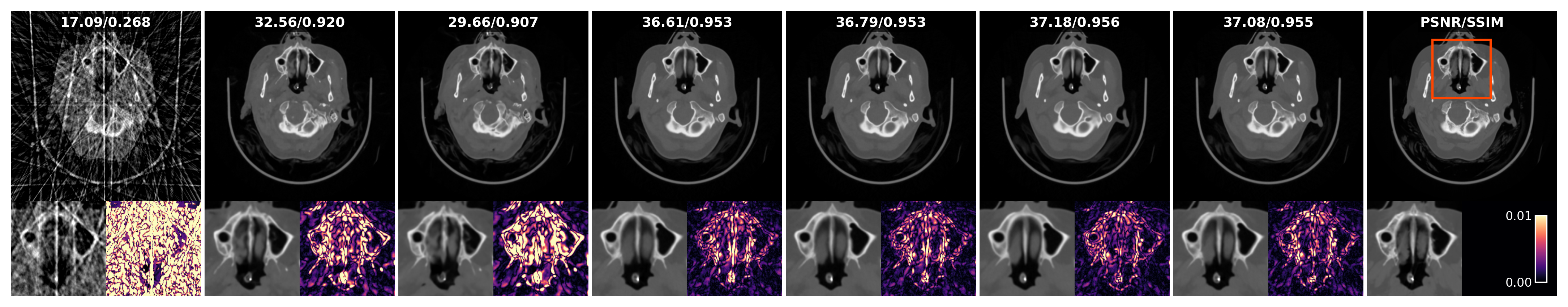}}

\end{tabular}

\caption{
Qualitative comparison on sparse-view CT reconstruction with $n_{\theta} \in \{20, 30, 40, 50\}$ projection views on the Head CQ500 dataset.
For each method, the top image shows the full reconstruction, while the bottom panels show the zoomed region and corresponding error map.
PSNR/SSIM values are reported on top of each reconstruction.
}
\label{fig:svct_cq500_qualitative}
\end{figure*}

\subsubsection{Quantitative comparison}

Table~\ref{tab:quantitative_comparison_datasets} reports the quantitative comparison on the Lymph Node and CQ500 datasets for $n_\theta \in \{20,30,40,50\}$ projection views. Across the two datasets, ML-SPnP reaches reconstruction quality very close to the stochastic \gls{PnP} baselines. In particular, its \gls{PSNR}, \gls{SSIM}, and \gls{LPIPS} values are nearly identical to those of Prox-\gls{SNORE} and annealed Prox-\gls{SNORE} in most settings, showing that the proposed multilevel phase preserves the reconstruction accuracy of the underlying stochastic prior. The main benefit is computational, with the gain depending on the dataset. On the Lymph Node dataset, ML-SPnP reduces runtime by approximately $58$--$67\%$ relative to Prox-\gls{SNORE} and by about $20$--$52\%$ relative to annealed Prox-\gls{SNORE} across the four view settings. On CQ500, the reduction is approximately $41$--$57\%$ relative to Prox-\gls{SNORE} and about $6$--$32\%$ relative to annealed Prox-\gls{SNORE}. Thus, ML-SPnP consistently provides a substantial acceleration over the standard stochastic \gls{PnP} baseline, while its additional gain over the annealed variant is strongest on the Lymph Node dataset and more moderate on CQ500.

Compared with the diffusion-based baselines, ML-SPnP consistently provides substantially higher distortion and perceptual scores, especially in the most ill-posed 20- and 30-view regimes. Diffusion-based solvers can be faster in some settings, but this speed comes with a clear loss in \gls{PSNR}, \gls{SSIM}, and \gls{LPIPS}. Prox-\gls{ML}-\gls{PnP} is more competitive in runtime and often achieves the best \gls{PSNR} and \gls{SSIM}; however, its \gls{LPIPS} values are consistently worse than those of ML-SPnP and the \gls{SNORE}-based methods. These results indicate that ML-SPnP offers a favorable accuracy--runtime compromise: it keeps the perceptual behavior of stochastic \gls{PnP} methods while substantially reducing their computational cost.

\subsubsection{Qualitative comparison}

Figures~\ref{fig:svct_20_40_qualitative} and~\ref{fig:svct_cq500_qualitative} present qualitative results for the \gls{SVCT} reconstruction task at different numbers of projection views. The \gls{FBP} baseline suffers from severe streak artifacts, which degraded anatomical structures. Diffusion-based methods substantially reduce these artifacts and recover sharper images, but they may introduce visually inconsistent local structures, especially in the $n_{\theta} \in \{20, 30\}$ view settings, as reflected by residual errors around tissue interfaces and high-contrast anatomical boundaries. While \gls{DiffPIR} performs well in the less ill-posed 50-view setting, \gls{DDS} tends to struggle with high-contrast structures, suggesting that diffusion-based solvers remain sensitive to severe angular undersampling and sharp anatomical transitions. 

In contrast, \gls{PnP} methods produce more anatomically coherent reconstructions, with less limited hallucination-like artifacts.  The proposed ML-SPnP exhibits visual behavior consistent with Prox-\gls{SNORE} and Ann-Prox-\gls{SNORE}, preserving the main anatomical structures and yields error maps with comparable intensity and spatial distribution, even under the most severe sparse-view settings. In contrast to diffusion-based methods, which hallucinate anatomically plausible but clinically unreliable structures in highly ill-posed regimes (cf. magnified areas in Figures~\ref{fig:svct_20_40_qualitative} and ~\ref{fig:svct_cq500_qualitative}), \gls{PnP} methods typically tend to produce smoother reconstructions. Although this smoothing may reduce fine anatomical detail, it is generally a safer failure mode for medical imaging than hallucinating spurious structures \cite{denker2026stability}.

While Prox-\gls{ML}-\gls{PnP} achieves higher \gls{PSNR} and \gls{SSIM}, the zoomed regions in the $n_{\theta} \in \{40, 50\}$ view settings show visible oversmoothing compared with the \gls{SNORE}-based methods and the proposed ML-SPnP. This suggests that the gain in global metrics does not always reflect better preservation of local image details. In these severe sparse-view cases, \gls{ML}-\gls{PnP} tends to blur fine structures and anatomical boundaries, whereas the \gls{SNORE}-based methods and ML-SPnP preserve sharper local variations. This observation highlights the need to interpret \gls{PSNR} and \gls{SSIM} together with visual inspection, particularly when assessing clinically relevant details.

\subsubsection{Convergence and runtime analysis}




Figure~\ref{fig:lymph-ct20-metric-curves} shows the elapsed-time evolution of \gls{PSNR}, \gls{SSIM}, and \gls{LPIPS} for the 20-view setting on the Lymph Node dataset, while Figure~\ref{fig:cq500-ct40-metric-curves} reports the same metrics for the 40-view setting on the CQ500 dataset. These curves complement Table~\ref{tab:quantitative_comparison_datasets} by showing not only the final reconstruction quality, but also how quickly each method reaches its plateau.

The metric curves confirm that ML-SPnP reaches the \gls{SNORE}-level plateau much earlier than Prox-\gls{SNORE}. The standard Prox-\gls{SNORE} baseline improves steadily but requires substantially more elapsed time before stabilizing. Annealed Prox-\gls{SNORE} accelerates this process through its coarse-to-fine noise schedule, but ML-SPnP still reaches comparable \gls{PSNR}, \gls{SSIM}, and \gls{LPIPS} values sooner. This behavior supports the role of the multilevel phase as an effective warm-start mechanism for stochastic \gls{PnP} reconstruction.

Relative to Prox-\gls{ML}-\gls{PnP}, ML-SPnP does not always provide the fastest increase in distortion-based metrics. Prox-\gls{ML}-\gls{PnP} can reach similar or higher \gls{PSNR} and \gls{SSIM} plateaus, which is consistent with the quantitative results. However, ML-SPnP attains lower \gls{LPIPS} values and follows the perceptual behavior of the \gls{SNORE}-based methods more closely. Thus, the convergence plots show the same trade-off as the table: Prox-\gls{ML}-\gls{PnP} is strong in distortion metrics, whereas ML-SPnP provides faster stochastic-\gls{PnP} convergence with better perceptual fidelity.

\subsection{Ablation Study}

\subsubsection{Cost of stochastic prior-coherence estimation}\label{sec:cost_coherence}

We evaluate the cost of enforcing prior first-order coherence in a standard multilevel formulation with stochastic \gls{PnP} regularization defined directly in the image space, without the wavelet approximation-detail separability used in ML-SPnP. In this setting, the prior-coherence correction does not cancel, and the coarse prior gradient must be matched to the restricted fine-level prior gradient. Since the stochastic prior is defined through an expectation over Gaussian perturbations, this correction has to be approximated by Monte-Carlo averaging. We therefore compare variants using different denoiser's \gls{NFE} to estimate the prior-coherence term.

As shown in Figure~\ref{fig:coherence_prior_nfe}, increasing the \gls{NFE} improves the accuracy of the prior-coherence estimate and leads to better reconstruction quality. However, this comes with a substantial increase in computational cost, since each additional sample requires an extra stochastic denoiser evaluation. 

In contrast, the proposed formulation avoids this expensive Monte-Carlo correction by working in the wavelet multiresolution space, where the approximation-detail separability assumption makes the prior-coherence term vanish in expectation. As a result, ML-SPnP retains only the data-fidelity coherence correction, achieving a more favorable trade-off between reconstruction quality and runtime.

\begin{figure}[H]
\centering

\begin{tikzpicture}
\begin{axis}[
    hide axis,
    width=0pt,
    height=0pt,
    scale only axis,
    xmin=0,
    xmax=1,
    ymin=0,
    ymax=1,
    legend columns=4,
    legend cell align={left},
    legend style={
        font=\tiny,
        draw=none,
        fill=none,
        column sep=2pt,
        /tikz/every even column/.append style={column sep=3pt},
    },
    legend to name=coherencelegend,
]

\addlegendimage{blue, thick, mark=*, mark size=1.2pt}
\addlegendentry{1 NFE}

\addlegendimage{red, thick, mark=square*, mark size=1.2pt}
\addlegendentry{5 NFE}

\addlegendimage{green!60!black, thick, mark=triangle*, mark size=1.4pt}
\addlegendentry{10 NFE}

\addlegendimage{purple, thick, mark=diamond*, mark size=1.3pt}
\addlegendentry{ML-SPnP}

\end{axis}
\end{tikzpicture}

\vspace{-0.6em}
\ref{coherencelegend}
\vspace{0.2em}

\begin{minipage}[t]{0.49\linewidth}
\centering
\begin{tikzpicture}
\begin{axis}[
    width=\linewidth,
    height=0.95\linewidth,
    xlabel={Iter.},
    ylabel={PSNR $\uparrow$},
    grid=both,
    xmin=0,
    xmax=10,
    xtick={0,2,4,6,8,10},
    tick label style={font=\scriptsize},
    label style={font=\scriptsize},
]

\addplot[
    blue,
    thick,
    mark=*,
    mark size=1.2pt,
]
coordinates {
    (0, 27.96539878845215)
    (1, 30.633359909057617)
    (2, 32.24475860595703)
    (3, 32.991241455078125)
    (4, 33.48724365234375)
    (5, 33.86578369140625)
    (6, 34.234619140625)
    (7, 34.58992004394531)
    (8, 34.91011047363281)
    (9, 35.25995635986328)
};

\addplot[
    red,
    thick,
    mark=square*,
    mark size=1.2pt,
]
coordinates {
    (0, 28.82077980041504)
    (1, 31.567888259887695)
    (2, 33.38408660888672)
    (3, 34.44103240966797)
    (4, 34.80732727050781)
    (5, 35.156280517578125)
    (6, 35.47209930419922)
    (7, 35.76331329345703)
    (8, 36.045196533203125)
    (9, 36.28889465332031)
};

\addplot[
    green!60!black,
    thick,
    mark=triangle*,
    mark size=1.4pt,
]
coordinates {
    (0, 30.154899215698243)
    (1, 32.69035491943359)
    (2, 34.08464202880859)
    (3, 35.2962890625)
    (4, 35.72798309326172)
    (5, 36.02620849609375)
    (6, 36.29879150390625)
    (7, 36.594117736816404)
    (8, 36.84848937988281)
    (9, 37.097596740722654)
    };

\addplot[
    purple,
    thick,
    mark=diamond*,
    mark size=1.3pt,
]
coordinates {
    (0, 31.137521743774414)
    (1, 33.95660400390625)
    (2, 35.70683288574219)
    (3, 36.659446716308594)
    (4, 37.119911193847656)
    (5, 37.45177459716797)
    (6, 37.79643249511719)
    (7, 38.066490173339844)
    (8, 38.31523132324219)
    (9, 38.48554992675781)
};

\end{axis}
\end{tikzpicture}
\label{fig:coherence_psnr}
\end{minipage}
\hfill
\begin{minipage}[t]{0.49\linewidth}
\centering
\begin{tikzpicture}
\begin{axis}[
    width=\linewidth,
    height=0.95\linewidth,
    xlabel={Iter.},
    ylabel={Time (s) $\downarrow$},
    grid=both,
    xmin=0,
    xmax=10,
    xtick={0,2,4,6,8,10},
    tick label style={font=\scriptsize},
    label style={font=\scriptsize},
]

\addplot[
    blue,
    thick,
    mark=*,
    mark size=1.2pt,
]
coordinates {
    (0, 1.0506274700164795)
    (1, 1.858900547027588)
    (2, 2.597351312637329)
    (3, 3.322490692138672)
    (4, 3.6108455657958984)
    (5, 3.8986198902130127)
    (6, 4.186116933822632)
    (7, 4.474105596542358)
    (8, 4.7615580558776855)
    (9, 5.050388336181641)
};

\addplot[
    red,
    thick,
    mark=square*,
    mark size=1.2pt,
]
coordinates {
    (0, 1.6366994380950928)
    (1, 3.037172794342041)
    (2, 4.36025857925415)
    (3, 5.687988042831421)
    (4, 5.978463888168335)
    (5, 6.269982576370239)
    (6, 6.5612640380859375)
    (7, 6.851229190826416)
    (8, 7.142702102661133)
    (9, 7.432960033416748)
};

\addplot[
    green!60!black,
    thick,
    mark=triangle*,
    mark size=1.4pt,
]
coordinates {
    (0, 2.400193691253662)
    (1, 4.542677402496338)
    (2, 6.619223356246948)
    (3, 8.698932886123657)
    (4, 8.989053010940552)
    (5, 9.28018069267273)
    (6, 9.57110333442688)
    (7, 9.86269998550415)
    (8, 10.16288161277771)
    (9, 10.453956842422485)
};

\addplot[
    purple,
    thick,
    mark=diamond*,
    mark size=1.3pt,
]
coordinates {
    (0, 0.9270148277282715)
    (1, 1.5687036514282227)
    (2, 2.146151542663574)
    (3, 2.72006893157959)
    (4, 3.00651216506958)
    (5, 3.294682741165161)
    (6, 3.5822808742523193)
    (7, 3.869480848312378)
    (8, 4.156011343002319)
    (9, 4.443732500076294)
};

\end{axis}
\end{tikzpicture}
\label{fig:coherence_time}
\end{minipage}

\caption{Effect of the \gls{NFE} used to estimate the stochastic prior-coherence correction in a standard stochastic multilevel formulation on the first 10 iterations. Increasing the \gls{NFE} improves reconstruction quality but drastically increases runtime, while the proposed ML-SPnP avoids this correction through the wavelet prior-coherence cancellation.}
\label{fig:coherence_prior_nfe}
\end{figure}

\subsubsection{Effect of the annealing procedure}

We assess the impact of annealing by comparing ML-SPnP with a decreasing noise schedule against a non-annealed variant in which the multilevel steps use a fixed noise level, chosen to match the value used in the subsequent fine-level update. As shown in Figure~\ref{fig:annealing_effect}, annealing consistently improves reconstruction quality across iterations, both in terms of \gls{PSNR} and \gls{SSIM}. The improvement is most visible during the first multilevel iterations, where larger noise levels provide stronger regularization and lead to more stable coarse-scale corrections. As $\sigma_k$ decreases, the algorithm progressively transitions from robust large-scale updates to finer image refinement, yielding better final reconstruction quality.

\begin{figure}[H]
\centering

\begin{subfigure}[t]{0.49\linewidth}
\centering
\begin{tikzpicture}
\begin{axis}[
    width=\linewidth,
    height=0.85\linewidth,
    xlabel={Iter.},
    ylabel={PSNR $\uparrow$},
    grid=both,
    xmin=1,
    xmax=5,
    xtick={1,2,3,4,5},
    tick label style={font=\scriptsize},
    label style={font=\scriptsize},
    legend cell align={left},
    legend style={
        font=\tiny,
        at={(0.97,0.03)},
        anchor=south east,
        draw=none,
        fill=none,
        row sep=0pt,
    },
]

\addplot[
    blue,
    thick,
    mark=*,
    mark size=1.4pt,
]
coordinates {
    (1, 31.13663673400879)
    (2, 33.954689025878906)
    (3, 35.707557678222656)
    (4, 36.661163330078125)
    (5, 37.11341094970703)
};
\addlegendentry{w/ ann.}

\addplot[
    red,
    thick,
    mark=square*,
    mark size=1.4pt,
]
coordinates {
    (1, 26.86741828918457)
    (2, 29.142301559448242)
    (3, 30.90374183654785)
    (4, 32.23844909667969)
    (5, 32.55809783935547)
};
\addlegendentry{w/o ann.}

\end{axis}
\end{tikzpicture}
\label{fig:annealing_psnr}
\end{subfigure}
\hfill
\begin{subfigure}[t]{0.49\linewidth}
\centering
\begin{tikzpicture}
\begin{axis}[
    width=\linewidth,
    height=0.85\linewidth,
    xlabel={Iter.},
    ylabel={SSIM $\uparrow$},
    grid=both,
    xmin=1,
    xmax=5,
    xtick={1,2,3,4,5},
    tick label style={font=\scriptsize},
    label style={font=\scriptsize},
    legend cell align={left},
    legend style={
        font=\tiny,
        at={(0.97,0.03)},
        anchor=south east,
        draw=none,
        fill=none,
        row sep=0pt,
    },
]

\addplot[
    blue,
    thick,
    mark=*,
    mark size=1.4pt,
]
coordinates {
    (1, 0.8328847289085388)
    (2, 0.8870172500610352)
    (3, 0.9098154306411743)
    (4, 0.922856867313385)
    (5, 0.9295612573623657)
};
\addlegendentry{w/ ann.}

\addplot[
    red,
    thick,
    mark=square*,
    mark size=1.4pt,
]
coordinates {
    (1, 0.6360410451889038)
    (2, 0.7905584573745728)
    (3, 0.8629909753799438)
    (4, 0.8922607898712158)
    (5, 0.8985892534255981)
};
\addlegendentry{w/o ann.}

\end{axis}
\end{tikzpicture}
\label{fig:annealing_ssim}
\end{subfigure}

\caption{
Effect of annealing procedure on ML-SPnP across multilevel iterations.
Using a decreasing noise schedule improves both PSNR and SSIM compared with a fixed-noise multilevel variant, with the largest gains observed in the early iterations.}
\label{fig:annealing_effect}
\label{fig:annealing_effect}
\end{figure}

These results indicate that the annealing schedule is well aligned with the coarse-to-fine structure of the proposed multilevel scheme. A fixed noise level cannot simultaneously provide strong early regularization and preserve fine details at later iterations. In contrast, annealing balances these two effects by promoting stable global corrections at high noise levels and sharper reconstructions at lower noise levels. Therefore, we retain the annealed strategy in the final ML-SPnP configuration.

\begin{figure*}[t]
\centering

\begin{minipage}[t]{0.32\textwidth}
\vspace{0pt}
\centering
\input{figures/experiments/averaged_curves_lymph/ct_20_psnr_mean_std_curves}
\end{minipage}
\hfill
\begin{minipage}[t]{0.32\textwidth}
\vspace{0pt}
\centering
\input{figures/experiments/averaged_curves_lymph/ct_20_ssim_mean_std_curves}
\end{minipage}
\hfill
\begin{minipage}[t]{0.32\textwidth}
\vspace{0pt}
\centering
\input{figures/experiments/averaged_curves_lymph/ct_20_lpips_mean_std_curves}
\end{minipage}
\vspace{-0.4cm}
\caption{
Mean reconstruction quality over elapsed time for 20-view CT on the Lymph Nodes dataset.
Curves show PSNR, SSIM, and LPIPS averaged over the test samples, with shaded regions indicating $\pm$ one standard deviation.
}
\label{fig:lymph-ct20-metric-curves}
\end{figure*}
\begin{figure*}[t]
\centering

\begin{minipage}[t]{0.32\textwidth}
\vspace{0pt}
\centering
\input{figures/experiments/average_curves_cq500/ct_40_psnr_mean_std_curves}
\end{minipage}
\hfill
\begin{minipage}[t]{0.32\textwidth}
\vspace{0pt}
\centering
\input{figures/experiments/average_curves_cq500/ct_40_ssim_mean_std_curves}
\end{minipage}
\hfill
\begin{minipage}[t]{0.32\textwidth}
\vspace{0pt}
\centering
\input{figures/experiments/average_curves_cq500/ct_40_lpips_mean_std_curves}
\end{minipage}
\vspace{-0.2cm}
\caption{
Mean reconstruction quality over elapsed time for 40-view CT on the head CQ500 dataset.
Curves show PSNR, SSIM, and LPIPS averaged over the test samples, with shaded regions indicating $\pm$ one standard deviation.
}
\label{fig:cq500-ct40-metric-curves}
\end{figure*}
    \section{Discussion}\label{sec:discussion}

The proposed ML-SPnP method addresses a practical limitation of stochastic \gls{PnP} reconstruction in a the multilevel context. However, several limitations remain. 

First, the approximation-detail independence assumption is an idealization. Natural and medical images do exhibit dependencies between coarse anatomical structures and fine details, and the prior-coherence correction may therefore not vanish exactly in practice. The empirical results suggest that ignoring this term is effective for the datasets and sampling regimes considered here, but the approximation may be less accurate for other anatomies, pathologies, or imaging protocols. 

Second, the number of multilevel levels and the denoising noise schedule during the multilevel phase are selected manually. Although the chosen schedules work well in our experiments, adaptive strategies could further improve robustness across acquisition settings and datasets. 

Third, the proposed multilevel phase should be interpreted as a finite acceleration or warm-start procedure rather than as a mechanism for which we establish an independent convergence guarantee. In particular, we do not claim that the multilevel steps themselves induce convergence; any asymptotic convergence properties are inherited from the finest-level stochastic \gls{PnP} algorithm used after the multilevel phase. 

Fourth, compared with Prox-\gls{ML}-\gls{PnP}, ML-SPnP is not always superior in terms of \gls{PSNR}, \gls{SSIM} or converging time. However, the qualitative results and \gls{LPIPS} scores suggest that ML-SPnP better preserves local perceptual structure in several settings, which is important in medical imaging since improvements in global distortion metrics may correspond to oversmoothing of diagnostically relevant details. 

Finally, the experiments are limited to simulated sparse-view acquisitions. Further validation on real clinical data would help assess robustness under practical acquisition conditions.
	\section{Conclusion}\label{sec:conclusion}

We introduced ML-SPnP, a method that combines stochastic denoising regularization with multilevel optimization in wavelet approximation spaces. By using the approximation-detail structure of an orthogonal wavelet decomposition, the proposed formulation avoids explicit estimation of the stochastic prior-coherence correction and retains only the deterministic data-fidelity coherence term, making the proposed multilevel algorithm faster.

Experiments on the Lymph Node and CQ500 datasets show that ML-SPnP reaches reconstruction quality comparable to Prox-\gls{SNORE} and Annealed Prox-\gls{SNORE}, while substantially reducing runtime across sparse-view settings. Overall, these findings indicate that multilevel stochastic \gls{PnP} is a promising direction for efficient and reliable sparse-view \gls{CT} reconstruction.


	\AtNextBibliography{\scriptsize} \printbibliography

\end{document}